\documentclass[]{tea}
\usepackage{microtype}
\usepackage{graphicx}
\usepackage{subcaption}
\usepackage{booktabs} % for professional tables

\usepackage{hyperref}

\usepackage{amsmath}
\usepackage{mathtools}
\usepackage{algorithm}
\usepackage{algorithmic}

\providecommand{\myfont}{\bfseries}

\usepackage{tabularx}
\usepackage{pifont}
\usepackage{xcolor}
\newcommand{\gray}[1]{\textcolor{gray}{#1}}
\usepackage{multirow}

\usepackage{etoc}
\usepackage{enumitem}

\makeatletter
\@ifundefined{openbox}{}{%
  
}
\makeatother

\usepackage{amsthm}

\theoremstyle{plain}
\newtheorem{theorem}{Theorem}[section]

\theoremstyle{definition}
\newtheorem{definition}[theorem]{Definition}

\theoremstyle{remark}

\usepackage[textsize=tiny]{todonotes}
\usepackage{amsmath}

\title{\myfont GSS: Gated Subspace Steering for Selective Memorization Mitigation in LLMs}

\author[1,2,*]{Xuanqi Zhang}
\author[3,*]{Haoyang Shang}
\author[1,2,\dagger]{Xiaoxiao Li}

\affiliation[1]{University of British Columbia}
\affiliation[2]{Vector Institute}
\affiliation[3]{
Independent Researcher}

\contribution[\dagger]{Corresponding author}
\contribution[*]{Equal Contribution}

\abstract{
{ Abstract}: Large language models (LLMs) can memorize and reproduce training sequences verbatim -- a tendency that undermines both generalization and privacy. Existing mitigation methods apply interventions uniformly, degrading performance on the majority of tokens that generalize normally. We show empirically that memorization is \textit{sparse, intermittent, and token-conditioned}, suggesting that effective mitigation requires context-aware intervention rather than static parameter modification. To this end, we propose a novel and effective selective memorization mitigation method -- Gated Subspace Steering (GSS), which decomposes intervention into a probe (detecting memorization-relevant activations) and a steer (applying targeted correction only when the probe exceeds a threshold). The optimal probe-steer pair emerges from a principled optimization framework based on \textit{optimal subspace steering}. Experiments on four benchmarks show GSS matches or exceeds state-of-the-art memorization reduction while requiring 100-1000× less compute than optimization-based alternatives. Furthermore, we provide new theoretical insights into the geometry of memorization in neural representations. 
}

\begin{document}
\maketitle

\section{Introduction}
%Background
When language models memorize training data verbatim, they become vulnerable: memorized sequences can be extracted by adversaries, limit generalization to novel inputs, and may reproduce copyrighted or sensitive content~\cite{shi2024large,nasr2025scalable, xiong2025landscape}. This memorization often occurs when adapting LLMs to downstream tasks through additional training or task-specific data~\cite{howard2018universal, guo2019spottune, wen2023unveiling}. While such adaptation often improves performance on particular downstream tasks, it can also increase the tendency of models to memorize training-specific sequences, especially when the data distribution is narrow, repetitive, or overrepresented~\cite{carlini2021extracting,graves2021amnesiac, lee2022deduplicating}. As a result, instead of learning abstract patterns or transferable representations, models may perform poorly on novel or distribution-shifted inputs~\cite{bayat2024pitfalls,chang2024localization}.

%Existing Methods
A natural question is how memorization should be mitigated without sacrificing the benefits of fine-tuning. Existing approaches address related but distinct problems: \emph{Machine unlearning}~\cite{cao2015towards, bourtoule2021machine} focuses on removing the influence of specific data points to satisfy regulatory requirements. For example, NPO~\cite{zhang2024negative} utilizes a simple alignment-inspired loss function for LLM unlearning that addresses the catastrophic collapse issue of gradient ascent. \emph{Knowledge editing}~\cite{wang2024easyedit, feng2025geoedit} aims to update individual factual associations in a localized manner. For instance, SERAC~\cite{mitchell2022memory} edits specific facts by using an external retrieval-based module at inference time, allowing targeted and reversible changes without updating the weights. While effective in their respective settings, these methods target particular data points, facts, or copyrighted material. \emph{Memorization mitigation}~\cite{chang2024localization,suri2025mitigating,chen2024multi,gupta2025pruning} targets a {behavior}: the model's tendency to reproduce training sequences verbatim rather than generalize. To address this issue, methods such as Zero\-Out~\cite{chang2024localization,zeiler2014visualizing} locate attributed neuron based on the memorization loss.

%Steering
Among these methods, {Activation Steering} has recently emerged as a lightweight alternative which modifies a model's behavior by adding a steering vector to the hidden states during inference~\cite{marks2023geometry, rimsky2024steering, hegazy2025guiding}. By shifting the internal representations toward a desired concept (e.g., honesty or safety), steering can effectively modulate high-level traits. However, existing steering techniques typically apply a static intervention: a constant vector is added to every token position throughout the entire generation process.
The static approach is ill-suited for mitigating memorization for two primary reasons. Meanwhile, researchers found that memorization is non-uniform and long-tail \cite{barbulescu2024each,cheng2025approaching}. A model may generate a dozen tokens of original, generalized content before encountering a specific trigger that activates a memorized sequence. Thus, a single-vector approach lacks the precision required to suppress verbatim recall without collateral damage to the model's broader capabilities.

These observations motivate a fundamental design principle: detection and correction should be decoupled. Prior steering methods~\cite{wang2025steering,turner2023steering} conflate these operations by using the same direction for both sensing memorization and actuating correction. We show that this coupling is suboptimal when memorization is sparse and incurs unnecessary collateral damage on non-memorized tokens. Effective intervention requires two capabilities: (1) \textbf{detection}: {identifying at each token position whether the model’s prediction is driven by generalization or by verbatim recall, and (2) \textbf{selective mitigation}: applying intervention only at those moments while preserving generalized computation. 

\iffalse
%Challenges
\textbf{Key Challenges.} The previous uniform intervention methods, whether parameter modification or constant steering, incur unnecessary collateral damage on non-memorized tokens. Effective intervention requires two capabilities: (1) \textbf{detection}: {identifying at each token position whether the model’s prediction is driven by generalization or by verbatim recall.} and (2) \textbf{selective mitigation}: applying intervention only at those moments while preserving generalized computation.

To this end, we first introduce the definition of token-level memorization signal following prior work~\cite{morris2025much}. Then, we introduce a \textbf{probe--steer framework}, which factorizes
activation-space intervention into two conceptually distinct components:
a \emph{probe} that measures whether the current activation lies in a
memorization-relevant region of representation space,
and a \emph{steer} operator that modifies the activation when such behavior is detected.
This factorization decouples \emph{when} to intervene from \emph{how} to intervene,
allowing interventions to be conditional on the activation itself rather than
applied uniformly across tokens.
\fi

Within this framework, we further propose \textbf{Gated Subspace Steering (GSS)}
as a concrete and principled instantiation.
GSS instantiates the probe as a learned low-rank subspace that detects memorization, and the steer removes
only the memorization-relevant component of the activation.
This design is motivated by two considerations:
(i) memorization typically manifests along multiple correlated directions rather than a single vector,
and (ii) steering preserve non-memorized components and avoid overcorrection inherent to vector-based steering.
We show that the optimal probe--steer pair under this formulation arises naturally
from an activation-level optimization objective and admits a closed-form solution.
Our contributions are as follows:

\begin{itemize}
\item We empirically characterize memorization as a \emph{token-conditioned and sparsely occurring behavioral phenomenon}.
Our analysis does not require an explicitly specified retain set, revealing a fundamental distinction between memorization mitigation and content-targeted model modification. 

\item We introduce a \emph{probe--steer} framework for activation-space intervention
that operates entirely at inference time.
The framework decouples \emph{when} to intervene from \emph{how} to intervene,
and provides a unified view of prior steering approaches.

\item We propose GSS, an activation subspace
steering method that conditionally suppresses memorization-aligned components in
hidden representations while reserves generalizability components.

\item We provide theoretical analysis connecting memorization to geometric structure in activation space, showing that the optimal intervention emerges from generalized singular value decomposition of the memorization  matrix relative to the generalization covariance.

\item Across four benchmarks, we show that GSS consistently reduces memorization while preserving task performance, outperforming or matching state-of-the-art baselines. Crucially, our method incurs negligible inference-time overhead and is significantly faster than other approaches.

\end{itemize}

\section{Related Work}
%\xl{You can cut the later contents first. If no sufficient space, please shorten the related work into one paragraph and move the whole section to appendix. Pointing to the full section in your shorter whether.}
\subsection{Memorization in Language Models}

Prior work shows that LLMs can memorize rare training examples and reproduce them verbatim at inference time~\cite{carlini2019secretsharer,dang2025memories,zhang2025beyond}. 
Carlini et al.~\cite{carlini2019secretsharer} propose the exposure metric to quantify how much more likely a model is to generate a specific sequence than expected under a random baseline. 
Subsequent work extends this analysis across architectures and scales~\cite{carlini2022quantifying,ruzzetti2025private,huang2024demystifying,zhang2025get}, identifying training-data duplication, model size, and training duration as key drivers of memorization. 
In parallel, black-box extraction attacks~\cite{carlini2021extracting} demonstrate that memorization can lead to the recovery of verbatim training data, including personally identifiable information, from models such as GPT-2~\cite{radford2019language}.

\subsection{Activation Steering and Representation Engineering}

Activation steering intervenes directly on internal activations at inference time to control model behavior without modifying parameters. 
A common paradigm is contrastive activation addition~\cite{rimsky2024steering,marks2023geometry}, where a direction $u$ is extracted from differences between contrasting datasets or prompts and added to intermediate representations as $h \leftarrow h + \alpha u$. 
Related approaches interpret specific directions as encoding semantic attributes or task signals~\cite{leeprogramming,wang2025steering,hegazy2025guiding}. 
Most existing methods adopt the same direction with a fixed or globally scaled intervention~\cite{zhao2025adasteer,wu2025axbench}. 
Such symmetry applies interventions uniformly and limits applicability when the target behavior is sparse and context-dependent. 
These limitations motivate conditional and gated interventions driven by internal signals rather than uniform shifts.

\section{Pilot Study: Token-Level Memorization}
\label{sec:token_level}
%\xl{Please shorten this paragraph. You did not claim solving the estimation of (1) with your approximation is your contribution. So you dont have to start with introducing (1) and then say `while principled'. We can start by saying that we introduce (2) following ~\cite{morris2025much} for the purpose of xxxx. \\Also (2) and (3) seem repeated; you can even directly give Definition 3.1 and detail how it's derived from (1)$\rightarrow$ (2) $\rightarrow$ \ri(3) in the appendix.}
We first establish empirically that memorization operates at token granularity, motivating our approach.

\subsection{Memorization Signal}

Following~\cite{morris2025much}, we characterize memorization using a
\emph{token-level confidence gap signal} between a fine-tuned model and a reference model.

\begin{figure}[t]
    \centering
    \begin{subfigure}[t]{0.48\textwidth}
        \centering
        \includegraphics[width=\linewidth]{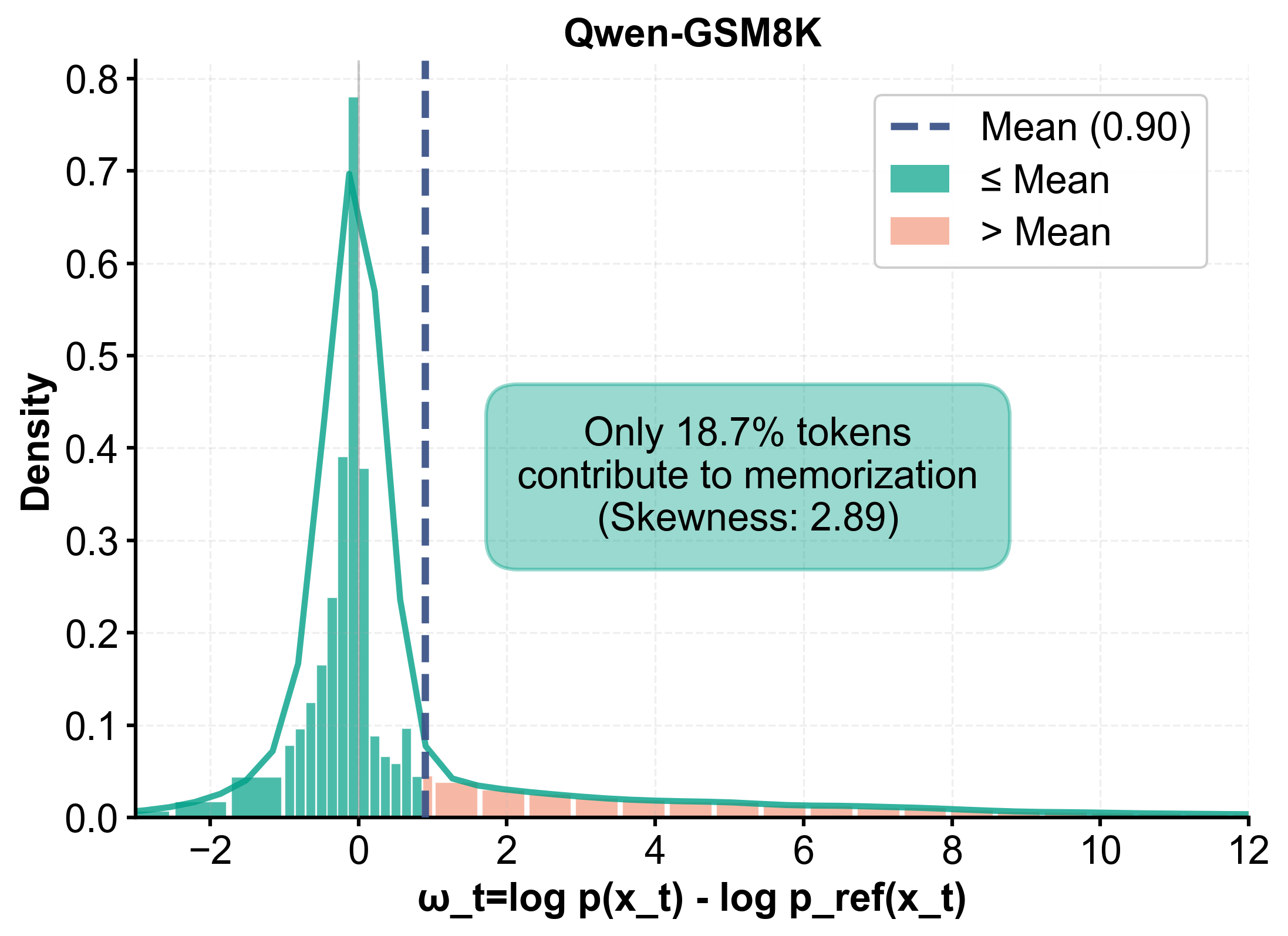}
        \caption{Heavy-tailed distribution of token-level memorization.}
        \label{fig:teaser_distribution}
    \end{subfigure}\hfill
    \begin{subfigure}[t]{0.48\textwidth}
        \centering
        \includegraphics[width=\linewidth]{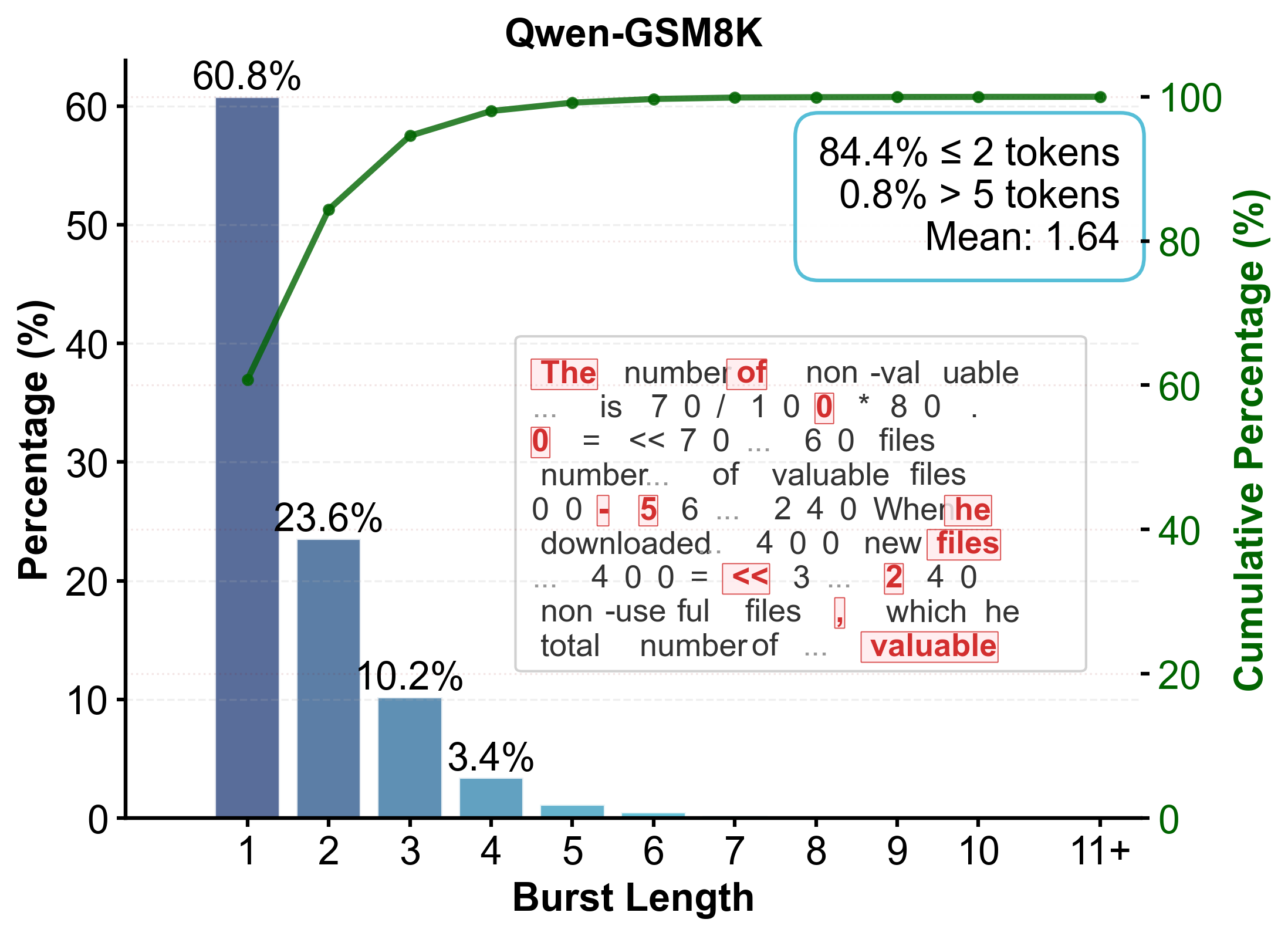}
        \caption{Duration of memorization bursts.}
        \label{fig:teaser_burst}
    \end{subfigure}
    \caption{
    Token-level memorization statistics.
    (a) Histogram of consecutive tokens with memorization signal ($\omega_t>0$).
    (b) Right-skewed heavy-tailed distribution, indicating memorization is driven by a small fraction of high-magnitude tokens.
    }
    \label{fig:teaser}
\end{figure}

This signal provides a proxy for identifying positions where the model
exhibits behavior beyond generalization.
\begin{definition}[Token-level Memorization Signal]
\label{def:memorization_signal}
Given a sequence $x_{1:T}$, the \emph{token-level memorization signal} at position $t$
is defined as the log-likelihood ratio between the fine-tuned model $p_\theta$
and a reference model $p_{\mathrm{ref}}$:
\begin{equation}
\label{eq:omega}
\omega_t
\;\triangleq\;
\log p_\theta(x_t \mid x_{<t})
-
\log p_{\mathrm{ref}}(x_t \mid x_{<t}).
\end{equation}
\end{definition}

This definition can be derived by approximating the sequence-level
information-theoretic memorization measure
with token-wise excess code length under standard coding interpretations;
we provide the full derivation
$\mathrm{MI}\!\rightarrow\!\text{cross-entropy}\!\rightarrow\!\omega_t$
in Appendix~\ref{app:proof_excess_codelen}.

Intuitively, $\omega_t$ measures the relative confidence gain of the fine-tuned model
over the reference at token $t$.
While small or smooth increases may reflect improved capability,
memorization typically appears as \emph{large, isolated spikes} in $\omega_t$.
We classify a token as \emph{memorization-dominant} if $\omega_t > \varepsilon$
(with $\varepsilon=0$ by default), and \emph{generalization-dominant} otherwise.
This decomposition localizes memorization to specific token positions,
enabling targeted analysis and intervention.

\subsection{Empirical Observations}

We fine-tune Qwen3 on GSM8K~\cite{cobbe2021training}, analyzing $\omega_t$ across sequences (More results are in Appendix \ref{app:additional_pilot_results}). Two key observations emerge:

\textbf{Observation 1: Temporal sparsity.}  In \cref{fig:teaser_distribution}, the memorization signal deviates from a Gaussian distribution around mean, instead exhibiting a heavy tail where mass is concentrated in a few high-magnitude tokens (only $18.7\%> \text{mean}$). Memorization is a sparse exception, while generalization remains dominant even in over fine-tuned models.

\textbf{Observation 2: Token-level intermittency.}
In \cref{fig:teaser_burst}, memorization is not a sustained state but a highly fragmented process (mean burst length is 1.64 tokens.). 
Even within sequences containing verbatim training data, the model frequently alternates between memorization and generalization (e.g., 60.8\% of bursts are single tokens), which implies that effective intervention must be capable of rapid activation and deactivation, rather than suppressing a continuous sequence.

\begin{figure*}[t]
    \centering
    % Top PDF
    \includegraphics[trim=0 0 10 0,width=\textwidth]{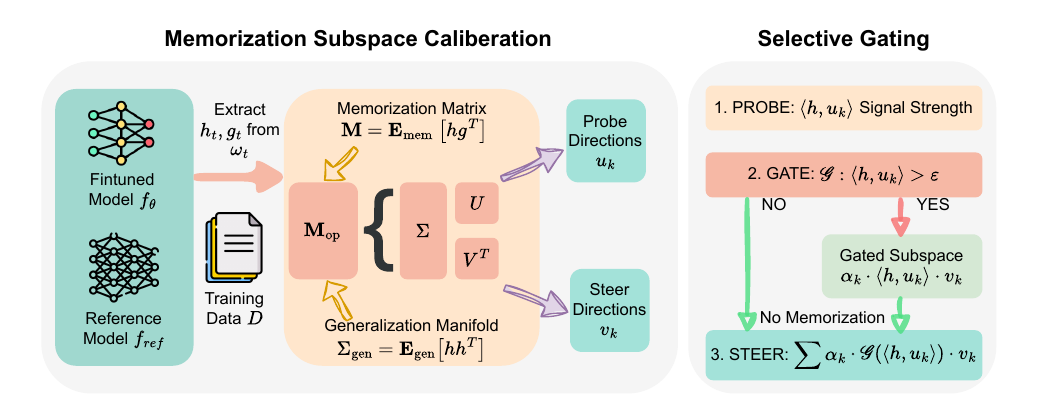}
    % No vertical space

    \caption{\textbf{(Top) Gated Subspace Steering (GSS) Overview.}
    (a) From memorization signals $\omega_t$, we derive decoupled Probe ($u_k$) and Steer ($v_k$) directions from the Memorization Matrix ($\mathbf{M}$) and Generalization Manifold ($\Sigma_{\text{gen}}$) via Generalized SVD.
    (b) During inference, the gating mechanism $\mathcal{G}$ computes the signal strength $\langle h, u_k \rangle$; the steering vector is applied only when this it exceeds a safety threshold $\varepsilon$. 
   }
    \label{fig:method_overview}
\end{figure*}

Taken together, these observations indicate that memorization is not well characterized
as a uniform or persistent property of the model.
Instead, it is a sparse and token-conditioned phenomenon that is selectively expressed
at particular positions, while the majority of tokens remain dominated by generalizable structure.
Crucially, this form of memorization does not require explicit specification of retained data
or replacement targets, and can be identified and mitigated directly from the model’s
token-level behavior during inference.

\section{Method}
\label{sec:method}

%\xl{The paragraph can be more packed. Now it reads too loosely.}

\begin{figure*}[t]
    \centering
    % Bottom PDF
    \includegraphics[trim=0 240 17 0,clip,width=0.9\textwidth]{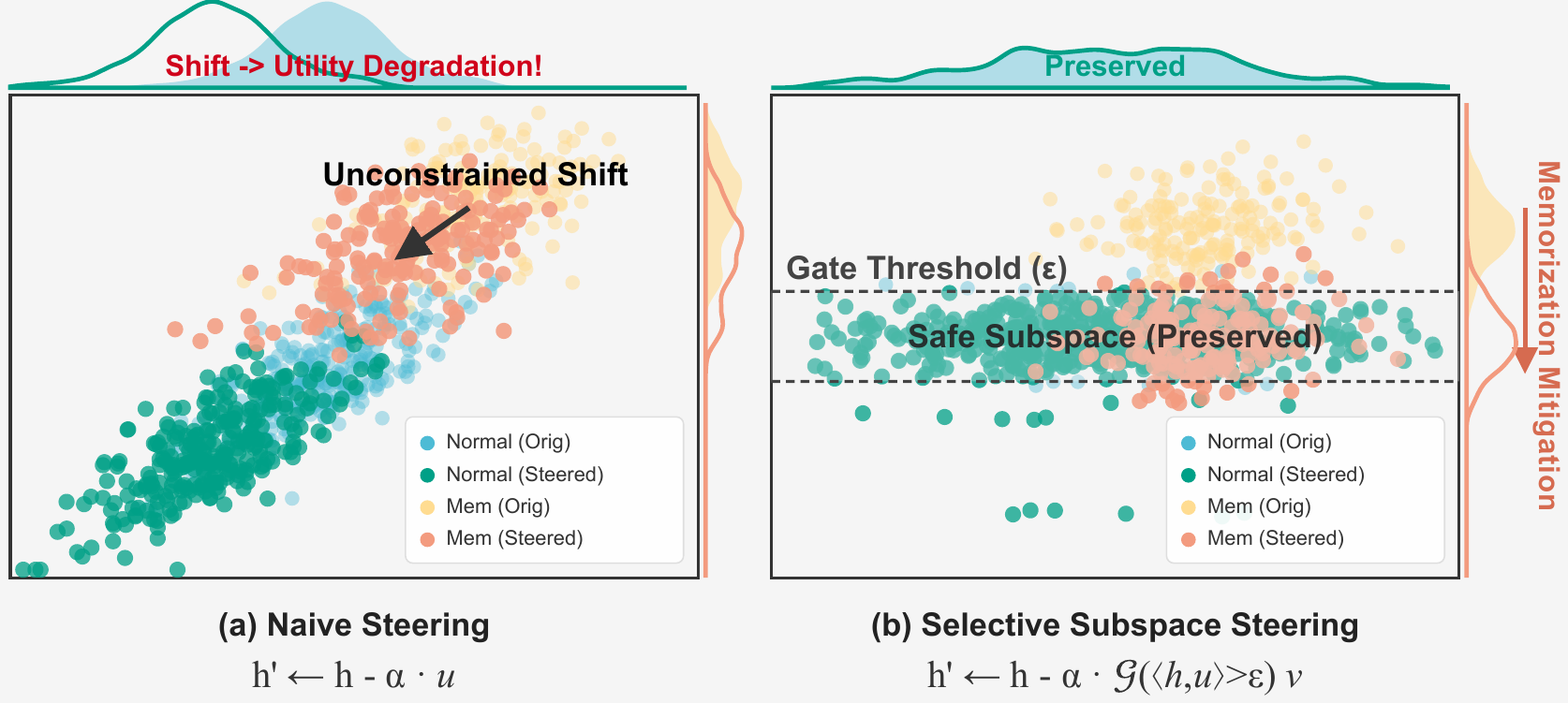}
    \caption{Geometric Visualization. A comparison of intervention in the activation space. 
    (a) Naive Steering applies a constant subtraction to all tokens, resulting in an ``Unconstrained Shift'' that inadvertently degrades generalized representations (shifting blue points). 
    (b) Gated Subspace Steering establishes a ``Safe Subspace'', where the intervention activates for memorized tokens (orange) that violate the gate threshold, effectively mitigating memorization while preserving the generalization manifold.}
    \label{fig:combined}
\end{figure*}

Building on the temporal sparsity and intermittency of memorization observed in Section~\ref{sec:token_level},
we propose \textbf{Gated Subspace Steering
(GSS)}, an inference-time intervention framework guided by three design principles:
\emph{(i) selectivity}—intervene only at token positions exhibiting memorization,
\emph{(ii) decoupling}—separate the detection of memorization from its correction,
and \emph{(iii) optimality}—derive probe--steer directions from a principled objective rather than heuristic construction.
These principles jointly aim to mitigate memorization while strictly bounding interference with generalized computation.
We now formalize this framework.

% ============================================================
\subsection{Problem Formulation}
\label{sec:formulation}

Let $h \in \mathbb{R}^d$ denote the hidden state at layer $l$.
Standard activation steering methods~\cite{marks2023geometry,stoehr2024activation,zhao2025adasteer}
apply a fixed additive intervention
\begin{equation}
h' = h - \alpha v ,
\end{equation}
uniformly across tokens and layers.
However, according to Section~\ref{sec:token_level}, uniform steering induces unnecessary perturbations on the majority of tokens that generalize normally,
leading to degraded utility as shown in Fig. ~\ref{fig:combined}.

To achieve selective intervention, we introduce a \emph{token-local gating mechanism}.
Given a probe direction $u$, we monitor the scalar $|u^\top h|$,
which serves as a detection signal for memorization-prone activations.
When this signal exceeds a threshold $\epsilon$, we apply a corrective update:
\begin{equation}
h' = h - \mathcal{G}\!\left(|u^\top h| > \epsilon\right)\cdot v .
\label{eq:steering_update}
\end{equation}
Here, $u$ functions as a \emph{probe} that detects memorization,
while $v$ specifies the corresponding \emph{steer} direction.
%Crucially, we do not require $u = v$, allowing detection and correction to be geometrically decoupled.
Given the gated intervention structure, the remaining question is how to choose effective probe--steer directions.
We require two complementary properties.

First, the intervention should be maximally effective when applied to memorization-prone tokens.
Let $\mathcal{D}_{mem}$ denote a set of token positions identified as memorized
(i.e., a positive memorization signal).
For a token-level activation $h$ with gradient
$g = \nabla_h \mathcal{L}_{mem}$,
where $\mathcal{L}_{mem}$ corresponds to the standard language modeling
cross-entropy loss evaluated at the selected token position,
a small perturbation $\Delta h$ induces a first-order change
$\Delta \mathcal{L}_{mem} \approx g^\top \Delta h$.
Accounting for the probe magnitude that triggers the intervention,
this leads to the objective
\begin{equation}
\mathbb{E}_{h \sim \mathcal{D}_{mem}}[(u^\top h)(g^\top v)]
= u^\top \mathbf{M} v,
\end{equation}
where
$\mathbf{M} = \mathbb{E}_{h \sim \mathcal{D}_{mem}}[h g^\top]$
is the \emph{memorization matrix},
capturing how activation perturbations translate to loss reduction on memorized tokens.

Second, to preserve generalized computation, the probe should remain largely inactive on non-memorized tokens.
We model the geometry of generalized activations via their covariance
\begin{equation}
\Sigma_{gen} =
\mathbb{E}_{h \sim \mathcal{D}_{gen}}[(h - \mu)(h - \mu)^\top],
\end{equation}
and impose a variance budget
$u^\top \Sigma_{gen} u \le \delta$,
which ensures that the probe activates on at most a $\delta$-fraction of generalized tokens
by Chebyshev's inequality.

% ============================================================
\subsection{Optimal Subspace Steering}
\label{sec:subspace}

According to Eq.~\ref{eq:steering_update}, the inference-time intervention is formalized as identifying a low-rank \emph{probe--steer} operator.
We consider a rank-$k$ linear intervention parameterized by a set of probe and steer directions.
A sign-adaptive rank-$k$ correction takes the form
\begin{equation}
\Delta h \;=\; - \sum_{k=1}^K (u_k^\top h)\, v_k, \qquad k=1,\ldots,K,
\label{eq:rankk_operator}
\end{equation}
where each scalar steering $u_k^\top h$ acts as an independent, mode-wise detection signal.
To characterize directions that are most effective for suppressing memorization, we first consider an ungated linear surrogate of the intervention in Eq.~\eqref{eq:rankk_operator}.
Substituting $\Delta h = - \sum_{k=1}^K (u_k^\top h)\, v_k$ and taking expectation over memorized tokens yields
\begin{equation}
\mathbb{E}\!\left[-g^\top \Delta h\right]
\;=\;
\sum_{k=1}^K \mathbb{E}\!\left[u_k^\top h \, g^\top v_k\right]
\;=\;
\sum_{k=1}^K u_k^\top \mathbf{M} v_k,
\label{eq:sum_objective}
\end{equation}
where $\mathbf{M}=\mathbb{E}_{t\sim\mathcal{D}_{mem}}[h_t g_t^\top]$ is the memorization matrix.
This formulation highlights that each probe--steer pair contributes independently to the expected first-order reduction in memorization loss.

To prevent unintended interference with generalized representations, we constrain each probe direction to remain mostly inactive on generalized tokens.
For a single probe direction $u_i$, we impose a variance budget
\begin{equation}
\mathbb{E}_{h \sim \mathcal{D}_{gen}}[(u_k^\top h)^2]
=
u_k^\top \Sigma_{gen} u_k
\;\le\;
\delta,
\label{eq:vector_constraint}
\end{equation}
which follows from a distribution-agnostic Chebyshev bound.
Importantly, since the inference-time intervention operates via independent probe directions, we impose this constraint on each probe individually rather than on the entire subspace.
To fix the scale of the steer directions and avoid degenerate rescalings between probes and steers, we additionally impose $\|v_k\|_2 = 1$ for all $k$.

Combining the memorization alignment objective with the direction-wise safety constraints yields the following rank-$k$ constrained maximization problem:
\begin{equation}
\label{eq:constrained_opt_rank_k}
\begin{aligned}
    \max_{\{u_k, v_k\}_{k=1}^k}
    &\quad \sum_{k=1}^K u_k^\top \mathbf{M} v_k \\
    \text{s.t.} 
    &\quad u_k^\top \Sigma_{gen} u_k \le \delta, \qquad k=1,\ldots,K, \\
    &\quad \|v_k\|_2 = 1, \qquad k=1,\ldots,K.
\end{aligned}
\end{equation}

Optimizing each direction remains nontrivial due to the anisotropic, ellipsoidal constraint imposed by $\Sigma_{gen}$. 
The key insight is that the ellipsoidal constraint 
\(
u^\top \Sigma_{\mathrm{gen}} u \le \delta
\)
can be transformed into a spherical constraint via whitening, 
thereby reducing the problem to a standard singular value decomposition (SVD).
Although the objective in Eq.~\eqref{eq:constrained_opt_rank_k} is additive,
the rank-$k$ solution is obtained by the top-$k$ singular vectors of the whitened
memorization matrix $\mathbf{M}_{op}=L^{-1}\mathbf{M}$.
These directions are orthogonal in the whitened space, corresponding to
$\Sigma_{gen}$-orthogonality in the original space, and thus do not degenerate
to repeated selection of the leading mode.
For numerical stability in high dimensions, standard diagonal
regularization is applied to $\Sigma_{gen}$ during whitening.

\begin{table*}[t]
\caption{
Taxonomy of baseline methods. We categorize methods by their intervention target ($\theta$ vs. $h$).
Parameter-space methods include Unlearning (via optimization or localization), Model Editing, and Task Arithmetic.
\textbf{Inference Intervention} methods operate on internal activations without modifying weights, categorized into \textit{Ablation} (suppressing features) and \textit{Steering} (redirecting representations).
}
\begin{center}
\begin{small}
\renewcommand{\arraystretch}{1.4} 
\begin{tabularx}{\textwidth}{@{} l l X @{}}
\toprule
\textbf{Paradigm} & \textbf{Target} & \textbf{Methods \& References} \\
\midrule

% --- Unlearning ---
\textsc{Unlearning} & Params $\theta$ &
\textbf{Optimization:} GA~\cite{jang2023knowledge}, NPO~\cite{zhang2024negative} \newline
\textbf{Localization:} Greedy, SOU, Durable, Subnet, BalancedSub~\cite{sakarvadia2024mitigating}
\\

% --- Editing ---
\textsc{Model Editing} & Params $\theta$ &
ROME~\cite{meng2022locating}, MEMIT~\cite{meng2022mass} %AlphaEdit~\cite{fangalphaedit}
\\

% --- Task Arithmetic ---
\textsc{Task Arithmetic} & Params $\theta$ &
TaskVector~\cite{ilharco2022editing}
\\

\midrule

% --- Activation Level (The specific fix) ---
\textsc{Inference-time} & Activations $h$ &
\textbf{Ablation:} Zero, Slimming, Hard Concrete, IG~\cite{chang2024localization}; \\
& &
\textbf{Steering:} LUNAR~\cite{shen2025lunar}, AlphaSteer~\cite{sheng2025alphasteer},
\textbf{Ours}

\\

\bottomrule
\end{tabularx}
\end{small}
\end{center}

\label{tab:taxonomy}
\end{table*}

\begin{theorem}[Optimal Probe--Steer Direction]
\label{thm:optimal_solution}
Let $\Sigma_{gen} = L L^\top$ be the Cholesky decomposition of the generalization covariance matrix.
Consider the direction-wise optimization problem
\begin{equation}
\max_{u, v} \quad u^\top \mathbf{M} v
\qquad
\text{s.t.} \quad
u^\top \Sigma_{gen} u \le \delta,\;\; \|v\|_2 = 1.
\end{equation}
Under the transformation $\tilde{u} = L^\top u$ and $\mathbf{M}_{op}=L^{-1}\mathbf{M}$, this problem reduces to
\begin{equation}
\max_{\tilde{u}, v} \quad \tilde{u}^\top \mathbf{M}_{op} v
\qquad
\text{s.t.} \quad
\|\tilde{u}\|_2^2 \le \delta,\;\; \|v\|_2 = 1.
\end{equation}
An optimal solution is given by
\begin{equation}
    {u}^* = \sqrt{\delta}\,\tilde{u}_1,
    \qquad
    v^* = \tilde{v}_1,
\end{equation}
where $\tilde{u}_1, \tilde{v}_1$ are the leading left and right singular vectors of the whitened memorization matrix
$\mathbf{M}_{op} = L^{-1} \mathbf{M}$.
\end{theorem}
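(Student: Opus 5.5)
The plan is a change of variables followed by a standard variational characterization of the leading singular pair. I assume $\Sigma_{gen}$ is positive definite; the diagonal regularization mentioned above guarantees this. Then the Cholesky factor $L$ is invertible and $\tilde u = L^\top u$ is a bijection of $\mathbb{R}^d$.

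\textbf{Step 1: reduction.} Substitute $u = L^{-\top}\tilde u$. The constraint becomes
\[
u^\top \Sigma_{gen} u = \tilde u^\top L^{-1} L L^\top L^{-\top}\tilde u = \|\tilde u\|_2^2 \le \delta .
\]
The objective becomes
\[
u^\top \mathbf{M} v = \tilde u^\top L^{-1}\mathbf{M} v = \tilde u^\top \mathbf{M}_{op} v .
\]
Since the map is a bijection, the feasible sets correspond one-to-one and objective values coincide. The two problems are therefore equivalent, and optimizers transfer through $u = L^{-\top}\tilde u$.

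\textbf{Step 2: solve the whitened problem.} Fix $v$ with $\|v\|_2=1$. By Cauchy--Schwarz,
\[
\tilde u^\top \mathbf{M}_{op} v \le \sqrt{\delta}\,\|\mathbf{M}_{op} v\|_2 ,
\]
with equality when $\tilde u = \sqrt{\delta}\,\mathbf{M}_{op}v/\|\mathbf{M}_{op}v\|_2$. If $\mathbf{M}_{op}v=0$, any feasible $\tilde u$ works. Maximizing next over the unit sphere,
\[
\max_{\|v\|_2=1}\|\mathbf{M}_{op} v\|_2 = \sigma_1(\mathbf{M}_{op}) .
\]
This follows from the SVD $\mathbf{M}_{op}=\sum_i \sigma_i \tilde u_i \tilde v_i^\top$: for unit $v$, $\|\mathbf{M}_{op}v\|_2^2=\sum_i \sigma_i^2 (\tilde v_i^\top v)^2 \le \sigma_1^2$, with equality at $v=\tilde v_1$. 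At $v=\tilde v_1$ we have $\mathbf{M}_{op}\tilde v_1 = \sigma_1 \tilde u_1$, so the optimal whitened probe is $\sqrt{\delta}\,\tilde u_1$. The optimal value is $\sqrt{\delta}\,\sigma_1(\mathbf{M}_{op})$, and the budget constraint is active.

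\textbf{Step 3: the main subtlety, which is interpretation rather than difficulty.} The formula $u^*=\sqrt{\delta}\,\tilde u_1$ must be read in whitened coordinates. Its original-space counterpart is $u^* = \sqrt{\delta}\,L^{-\top}\tilde u_1$, obtained by undoing the transformation of Step 1; I would state this explicitly. Two degenerate cases also need brief remarks. If $\sigma_1=\sigma_2$, the optimizer is not unique, which is why the statement says ``an'' optimal solution. The pair $(-\tilde u_1,-\tilde v_1)$ is also a valid singular pair, so the sign is fixed by the SVD convention. One could also remark that $\tilde u_i$ orthonormal means $u_i^\top \Sigma_{gen} u_j \propto \tilde u_i^\top \tilde u_j = 0$, which gives the $\Sigma_{gen}$-orthogonality claimed for the rank-$k$ extension; this is not needed for the theorem itself.
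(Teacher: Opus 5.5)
Your proof is correct and follows the paper's argument essentially step for step: the whitening substitution $\tilde u = L^\top u$, Cauchy--Schwarz for the inner maximization over $\tilde u$, the spectral-norm characterization for the maximization over $v$, and the map back to original coordinates. Your point that the stated $u^* = \sqrt{\delta}\,\tilde u_1$ is the whitened probe, with original-space counterpart $\sqrt{\delta}\,L^{-\top}\tilde u_1$, agrees with the paper's own final step, which derives exactly $u^* = \sqrt{\delta}\,L^{-\top}\tilde u_1$.
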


Proof is provided in Appendix~\ref{app:derivation}.
The optimal probe direction $u$ is not simply the top singular vector of $\mathbf{M}$ (the ``most memorization-sensitive'' direction), but rather the direction that achieves the best sensitivity per unit of activation on generalized tokens. The whitening by $\Sigma_{gen}$ penalizes directions on safe tokens. Similarly, $v$ targets the direction of maximum loss reduction, which may differ from $u$ when the memorization matrix $\mathbf{M}$ is asymmetric. When $u = v$ is enforced, as in prior steering methods, we recover mean difference steering as a special case.

% ============================================================

\subsection{Inference-Time Adaptive Gating}
\label{sec:adaptive}

With the optimal probe-steer direction $\{u^*_k, v^*_k\}_{k=1}^K$ identified,
We scale the steering vector by an adaptive coefficient
\begin{equation}
\alpha_k \;=\; \frac{\langle u^*_k, v^*_k\rangle}{\langle u^*_k, v^*_k\rangle^2 + \delta},
\end{equation}
Derivation are provided in Appendix~\ref{app:alpha_derivation}.
We apply the adaptive coefficient along the steer direction:
\begin{equation}
\label{eq:intervention}
    h'  = h  - \alpha_{k}\,
    \sum_{k=1}^K
    \mathbb{I}(|u{^*_k}^\top h| > \epsilon_k)\;
     v^*_k.
\end{equation}
 As a result, activations that remain within the generalization manifold are left unchanged while memorization-aligned components are selectively removed. Computational complexity is provided in Appendix~\ref{app:complexity}.

\section{Experiments}
\label{sec:experiments}

\begin{table*}[t]
\centering
\caption{\textbf{TinyMem Models} \cite{sakarvadia2024mitigating}. Comparison of memorization mitigation strategies on multiplicative math and language models under noise and backdoor settings. The results are averaged with 3 seeds. Metrics include $(n,k)$-memorization, test accuracy / perplexity , and wall-clock time of the mitigation pipeline (seconds).}
\scriptsize
\renewcommand{\arraystretch}{1.15}
\setlength{\tabcolsep}{6pt}

\begin{tabular}{l ccc ccc ccc ccc}
\toprule
\multirow{2}{*}{Method}
& \multicolumn{6}{c}{\textbf{Math Model}}
& \multicolumn{6}{c}{\textbf{Language Model}} \\
\cmidrule(lr){2-7} \cmidrule(lr){8-13}

& \multicolumn{3}{c}{Noise}
& \multicolumn{3}{c}{Backdoor}
& \multicolumn{3}{c}{Noise}
& \multicolumn{3}{c}{Backdoor} \\
\cmidrule(lr){2-4} \cmidrule(lr){5-7}
\cmidrule(lr){8-10} \cmidrule(lr){11-13}

& \%Mem $\downarrow$ & Acc $\uparrow$ & Time $\downarrow$
& \%Mem $\downarrow$ & Acc $\uparrow$& Time $\downarrow$
& \%Mem $\downarrow$ & Perp $\downarrow$ & Time $\downarrow$
& \%Mem $\downarrow$ & Perp $\downarrow$ & Time $\downarrow$ \\
\midrule

Baseline model
& 34.55 & 97.00 & --
& 99.44 & 96.81 & --
& 17.60 & 57.73 & --
& 100.00 & 58.45 & -- \\

Spectral norm reg
& 0.17 & 96.84 & 12254.19
& 99.89 & 96.68 & 1131.51
& 0.00 & 57.92 & 10850.23
& 100.00 & 90.42 & 238.58 \\

Loss truncation
& 30.58 & 96.97 & 2391.10
& 99.67 & 99.96 & 347.73
& 0.00 & 57.92 & 16080.35
& 100.00 & 58.45 & 6005.13 \\

%Example-tied reg
%& 0.00 & 29.95 & 3942.57
%& 6.11 & 17.07 & 352.44
%& -- & -- & --
%& -- & -- & --\\

\midrule

Both FT
& 0.00 & 97.17 & 28.45
& 0.00 & 91.36 & 20.35
& 0.00 & 51.63 & 1680.25
& 0.00 & 59.19 & 1629.64 \\

Clean FT
& 0.00 & 59.66 & 2.84
& 0.00 & 70.71 & 2.91
& 0.00 & 70.41 & 6.35
& 0.00 & 77.19 & 6.36 \\

Extra FT
& 0.00 & 73.22 & 27.33
& 0.00 & 74.63 & 18.58
& 0.00 & 52.00 & 1670.09
& 0.00 & 63.60 & 1624.24 \\

\midrule

HC
& 0.00 & 74.97 & 0.24
& 0.00 & 75.02 & 0.26
& 0.00 & 57.15 & 0.25
& 100.00 & 70.94 & 0.69 \\

Slim
& 0.00 & 63.34 & 1.47
& 0.00 & 57.12 & 1.41
& 0.00 & 57.10 & 0.42
& 100.00 & 70.40& 0.89 \\

Act
& 0.00 & 60.22 & 0.37
& 0.00 & 78.89 & 0.27
& 0.00 & 71.32 & 0.37
& 100.00 & 70.98 & 0.43 \\

IG
& 24.30 & 96.98 & 2657.29
& 46.67 & 54.19 & 1586.46
& 0.00 & 57.12 & 1647.10
& 0.00 & 102.72 & 1612.51 \\

Zero
& 22.15 & 96.86 & 11.91
& 99.44 & 73.44 & 12.10
& 0.00 & 57.12 & 16.47
& 100.00 & 70.44 & 63.05 \\

Greedy
& 12.73 & 96.19 & 17.62
& 46.67 & 54.19 & 18.46
& 0.00 & 102.72 & 1612.51
& 100.00 & 119.35 & 217.96 \\

SOU
& 0.81 & 87.02 & 313.45
& 78.89 & 76.11 & 449.80
& 26.00 & 57.28 & 573.71
& 100.00 & 70.74 & 585.08 \\

Durable
& 13.72& 96.45 & 2.78
& 0.00 & 65.63 & 3.23
& 2.80 & 57.52 & 0.73
& 100.00 & 73.43 & 3.36 \\

Durable-agg
& 14.71 & 96.41 & 2.60
& 24.69 & 86.18 & 3.08
& 0.40 & 68.47 & 0.75
& 100.00 & 85.14 & 3.33 \\

Subnet
& 20.33 & 96.91 & 0.48
& 0.00 & 28.66 & 0.46
& 14.80 & 57.28 & 0.41
& 94.87 & 70.83 & 1.00 \\

BalancedSub
& 0.66 & 91.97 & 7.29
& 0.00 & 92.01 & 6.96
& 0.80 & {57.14} & 299.59
& 0.00 & 71.86 & 917.96 \\
\midrule

TaskVector
& 0.00 & 89.31 & 0.37
& 9.44  & 93.20 & 0.37
& 16.80  & 57.71 & 0.45
& 100.00  & 70.41 & 0.67 \\

NPO
& 0.56 & 27.26 & 0.57
& 0.00  & 4.88 & 0.56
& 0.00  & 229.45 & 1.28
& 0.00  & 858.15 & 1.38 \\

AlphaSteer
& 24.30 & 96.98 & 6.35
& 0.92  & 61.24 & 2.09
& 0.00  & 64.29 & 15.82
& 0.00  & 76.08 & 18.74 \\

Lunar
& 24.28 & 96.97 & 1.15
& 0.00 & 58.69 & 1.27
& 0.00 & 67.36 & 14.93
& 0.00 & 75.96 & 17.61 \\

\midrule
Ours
& {0.00} & {96.98} & {0.001}
& {0.00} & {96.82} & {0.001}
& {0.00} & 63.13 & {0.002}
& {0.00} & {63.17} & {0.003} \\

\bottomrule
\end{tabular}

\label{tab:4layer_combined}
\end{table*}

\subsection{Experimental Setup}

We evaluate our proposed intervention across three distinct settings to assess memorization suppression, scalability, and downstream utility. First, we utilize the TinyMem~\cite{suri2025mitigating} suite which is developed for memorization mitigation evaluation. To analyze scalability, we examine performance on Pythia-2.8B and Pythia-6.9B. To demonstrate efficacy in realistic production environments, we fine-tune Llama-3.2-1B and Qwen3-0.6B on the GSM8K (reasoning)~\cite{xie2025memorization} and UltraChat~\cite{ding2023enhancing} (dialogue) benchmarks. All experiments are conducted on a single NVIDIA H100-80G GPU, comparing against the baselines detailed in Table~\ref{tab:taxonomy}. 
We evaluate our method along:
\begin{itemize}[nosep]
    \item \textbf{Memorization reduction}: measured using $(n,k)$-memorization~\cite{suri2025mitigating,carlini2021extracting}.
    \item \textbf{Task performance}: perplexity on held-out test sets and downstream task accuracy.
    \item \textbf{Efficiency}: wall-clock time of the mitigation pipeline.

\end{itemize}

\begin{definition}[$(n,k)$-Memorization]
\label{def:nk_memorization}
Let $s = (x_1, \dots, x_n)$ be an $n$-token sequence that appears in the training set of a language model $M$.
We say that $s$ is \emph{$(n,k)$-memorized} by $M$ if, when prompted with the prefix $(x_1, \dots, x_k)$,
the model reproduces the remaining tokens $(x_{k+1}, \dots, x_n)$ exactly using greedy decoding.
\end{definition}

We apply GSS at layer 4 for TinyMem models and layer 18 for Pythia models (Layer Selection Ablation is provided in Appendix~\ref{app:layer_ablation}).
The memorization matrix $M$ is constructed using 1,000 sequences
exhibiting $(n,k)$-memorization, while the generalization covariance
$\Sigma_{\text{gen}}$ is estimated from 1,000 held-out validation sequences.
The gating threshold $\varepsilon$ is set to the 95th percentile of
$|\langle h, u\rangle|$ on the validation set. The sensitivity of reference model is provided in Appendix~\ref{app:ref_sensitivity}.

% \begin{itemize}
%     \item \textbf{Unlearning methods}: GA~\cite{jang2023knowledge}, Greedy, Durable, Durable-agg, and Second-Order Unlearning (SOU)~\cite{chang2024localization}, which explicitly remove memorized information via weight-level updates or pruning.
%     \item \textbf{Knowledge editing}: ROME~\cite{meng2022locating} and MEMIT~\cite{meng2022mass}, which perform targeted parameter edits to overwrite specific factual associations.
%     \item \textbf{Regularization}: Spectral Norm Regularization~\cite{yoshida2017spectral}, Loss Truncation~\cite{kang2020improved}, Example-tied Dropout~\cite{maini2023can}
    
%     \item \textbf{Activation steering}: Mean activation difference~\cite{suri2025mitigating}, task-vector / gradient-difference steering, and NPO-style contrastive inference-time methods that rely on reference or preference data.
% \end{itemize}

\begin{table}[h]
\centering
\caption{Comparison of memorization mitigation strategies on {Pythia} models across  $(n,k)$-memorization, test perplexity, and time. Lower is better. The results are averaged with three seeds.}
\scriptsize
\setlength{\tabcolsep}{4.5pt}        
\renewcommand{\arraystretch}{1}  
\begin{tabular}{l|ccc|ccc}
\toprule
 & \multicolumn{3}{c|}{\textbf{Pythia 2.8B}} 
 & \multicolumn{3}{c}{\textbf{Pythia 6.9B}} \\
\textbf{Method} 
& \% Mem $\downarrow$ & PPL $\downarrow$ & Time $\downarrow$
& \% Mem $\downarrow$ & PPL $\downarrow$ & Time $\downarrow$ \\
\midrule
Baseline
& 52.87 & 21.75 & -- 
& 89.31 & 19.46 & -- \\

HC 
& 44.55 & 21.35 & 2.25 
& 87.55 & 17.56 & 36.01 \\

Slim 
& 14.46 & 23.41 & 1.96
& 33.46 & 18.90 & 280.35 \\

Act 
& 14.46 & 24.43 & 13.01 
& 30.30 & 18.34 & 36.02 \\

Durable 
& 6.93 & 35.49 & 17.44 
& 14.65 & 23.34 & 44.34 \\

Durable-agg 
& 7.52 & 32.35 & 231.43 
& 10.98 & 34.40 & 320.60 \\

Subnet 
& 8.01 & 32.20 & 48.75 
& 84.95 & 17.16 & 519.44 \\

{BalancedSub} 
& 5.94 & 31.52 & 2068.77 
& 86.73 & 17.15 & 233.42 \\
\midrule
\textbf{Ours} 
& 6.93 & 28.26 & 0.16 
& 6.96 & 29.15 & 0.21 \\

\bottomrule
\end{tabular}

\label{tab:pythia_unlearning}
\end{table}

%\xl{Shorten the results description for Sec 5.2 and 5.3. If space is limited, Sec 5.5 and 5.6 can be moved to the appendix.}

\subsection{Experimental Results on TinyMem}

As shown in Table~\ref{tab:4layer_combined}, our method consistently achieves complete memorization suppression across all settings. 
While some baselines reach zero memorization in isolated cases, they are  often at the cost of catastrophic utility loss. 
Neuron- and weight-level pruning methods (HC, Slim, Act) also suppress memorization but substantially degrade accuracy, particularly on math tasks that require precise algorithmic structure. 
In contrast, our method preserves near-baseline accuracy on math models and stable perplexity on language models, demonstrating selective suppression without disrupting learned computation.

Backdoor settings further highlight the limitations of static interventions. 
Many baselines that perform well under noise fail completely under backdoor perturbations, often reverting to $100\%$ memorization (e.g., HC, Slim, Act, Zero, Greedy on language backdoor). 
Our method remains effective under backdoor conditions.
Moreover, our method introduces a negligible inference-time overhead, making it suitable as a lightweight deployment-time safeguard rather than a costly retraining procedure.

\subsection{Experimental Results on Pythia}

Table~\ref{tab:pythia_unlearning} compares a wide range of memorization mitigation methods on Pythia-2.8B and Pythia-6.9B, including neuron pruning, weight pruning, and activation-based interventions. 
Aggressive approaches such as \emph{Greedy} and \emph{Durable} substantially suppress memorization but severely degrade perplexity, particularly on Pythia-6.9B. 
In contrast, our method reduces memorization from $89.31\%$ to $6.96\%$ with perplexity $29.15$ on Pythia-6.9B, markedly outperforming Greedy- and Durable-style methods. 
This improvement is obtained with negligible runtime overhead ($0.21$s), as the intervention operates purely at inference time without parameter updates or iterative optimization.

\subsection{Pareto Frontier Analysis on Modern LLMs}

\begin{figure}[h] % The * ensures it spans across both columns
    \centering
        \includegraphics[width=\linewidth]{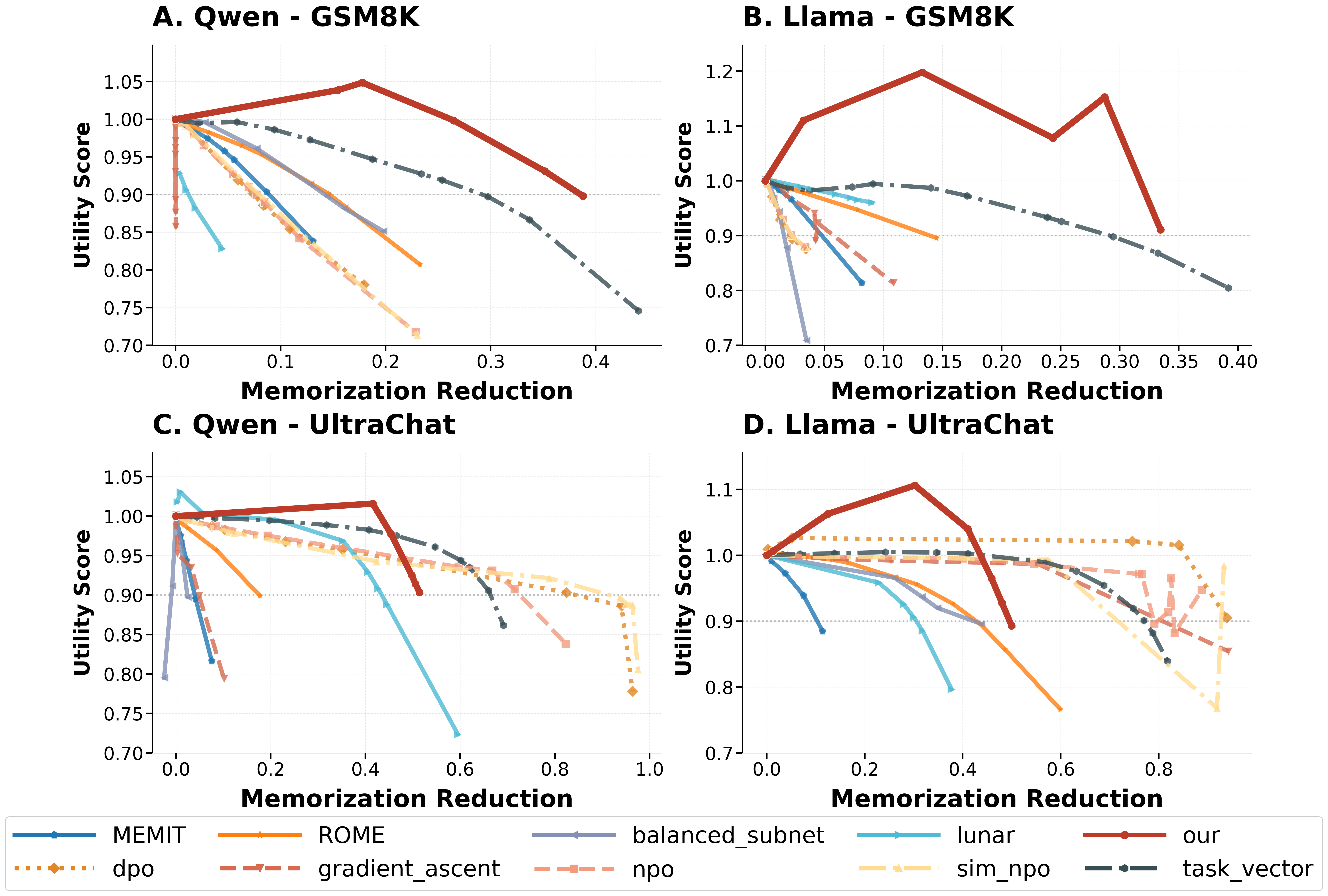}
    \caption{\textbf{Pareto Frontier Analysis.} 
    The plots visualize the trade-off between memorization reduction measured (x-axis, normalized by baseline) and downstream utility measured in log-likelihood (y-axis, normalized by baseline). The ideal method occupies the top-right corner. 
    \textbf{(Top: GSM8K)} On reasoning tasks, our method dominates the frontier. We observe utility recovery where mild steering improves performance ($>1.0$). % whereas \textit{Unlearning} baselines suffer immediate degradation. \textit{Editing} methods perform moderately but lack the dynamic adaptability of our approach.
    \textbf{(Bottom: UltraChat)} %On dialogue tasks, while \textit{Unlearning} baselines are effective, 
    Our method consistently outperforms \textit{Editing} baselines and maintains a Pareto frontier in the high-utility regime}%, achieving significant memorization reduction with minimal impact on generation quality.}
    \label{fig:pareto_frontier}
\end{figure}

To rigorously evaluate the efficiency of our intervention, we conduct a comprehensive Pareto analysis. 
Specifically, we sweep the \textit{learning rate} for Unlearning methods (e.g., GA), the \textit{regularization weight} for localization methods (e.g., BalancedSub), and the \textit{intervention magnitude} for Editing methods (e.g., ROME). 
For these experiments, we employ a Rank-1 configuration for our method to demonstrate its effectiveness even under minimal subspace constraints. \cref{fig:pareto_frontier} visualizes the Pareto frontier across methods and our method achieves the Pareto frontier within its effective range. 

\textbf{Reasoning Tasks (GSM8K).} 
On complex reasoning benchmarks (\cref{fig:pareto_frontier}A and B), we observe a clear hierarchy. 
\textit{Unlearning methods} (e.g., GA) struggle significantly; their curves show a steep vertical drop, indicating that the parameter updates required to erase memorization catastrophically damage the model's logical reasoning chain. 
\textit{Knowledge Editing methods} (ROME, MEMIT) show better stability but hit a performance ceiling. 
In contrast, our method establishes a superior frontier. Notably, we observe a slight \textbf{utility boost} (scores exceeding $1.0$) at moderate steering strengths. This suggests that for reasoning tasks, memorization acts as overfitting noise; by selectively projecting it out via a gated subspace, we recover latent generalization capabilities on the test set.

\begin{figure}[t]
    \centering
    \begin{subfigure}[b]{0.48\linewidth}
        \centering
        \includegraphics[width=\linewidth]{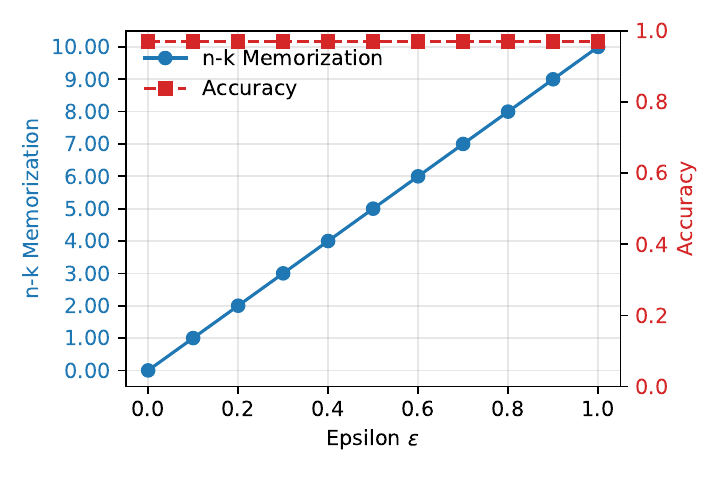}
        \caption{TinyMem}
    \end{subfigure}
    \begin{subfigure}[b]{0.48\linewidth}
        \centering
        \includegraphics[width=\linewidth]{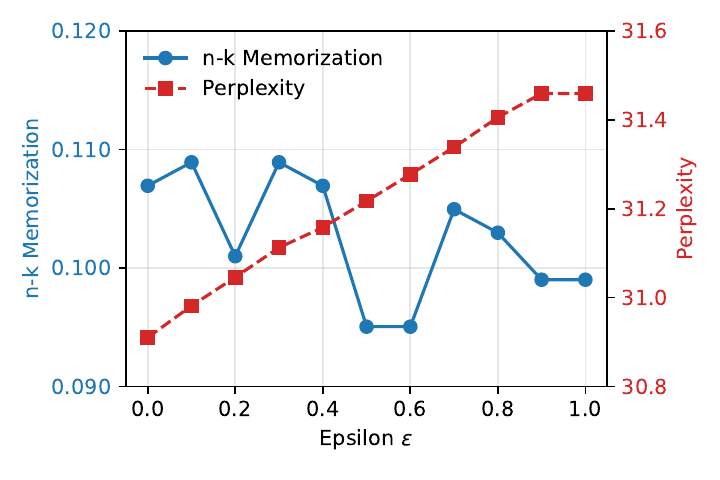}
        \caption{Pythia\_2.8B}
    \end{subfigure}
    \caption{Ablation study on $\epsilon$ in the TinyMem math and Pythia 2.8B models.}
    \label{fig:epsilon}
\end{figure}

\begin{figure}[t]
    \centering
    \begin{subfigure}[t]{0.48\linewidth}
        \centering
        \includegraphics[width=\linewidth,trim={0 1em 0 0},clip]{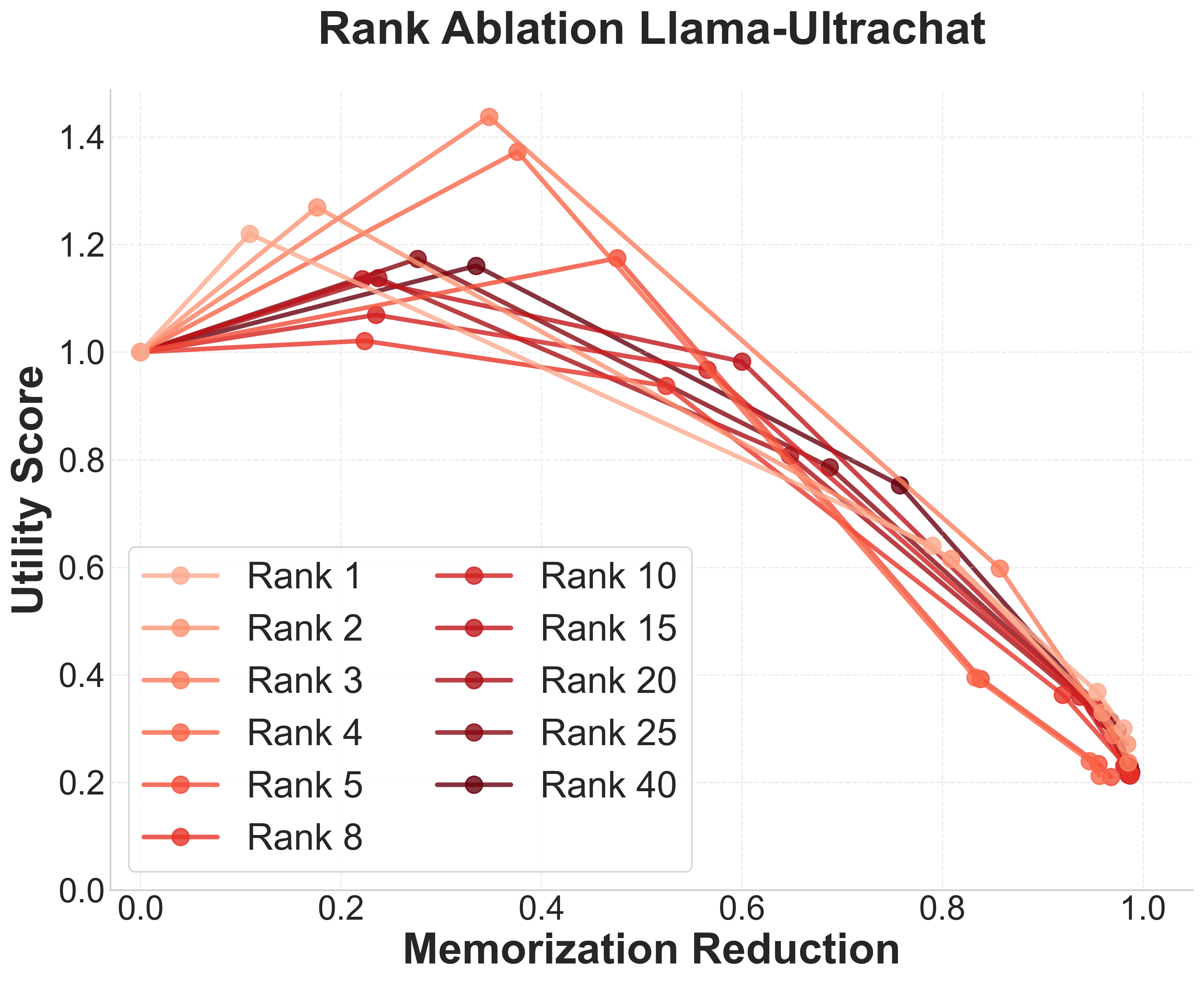}
        \label{fig:rank_pareto_llama}
    \end{subfigure}
    \begin{subfigure}[t]{0.48\linewidth}
        \centering
        \includegraphics[width=\linewidth,trim={0 1em 0 0},clip]{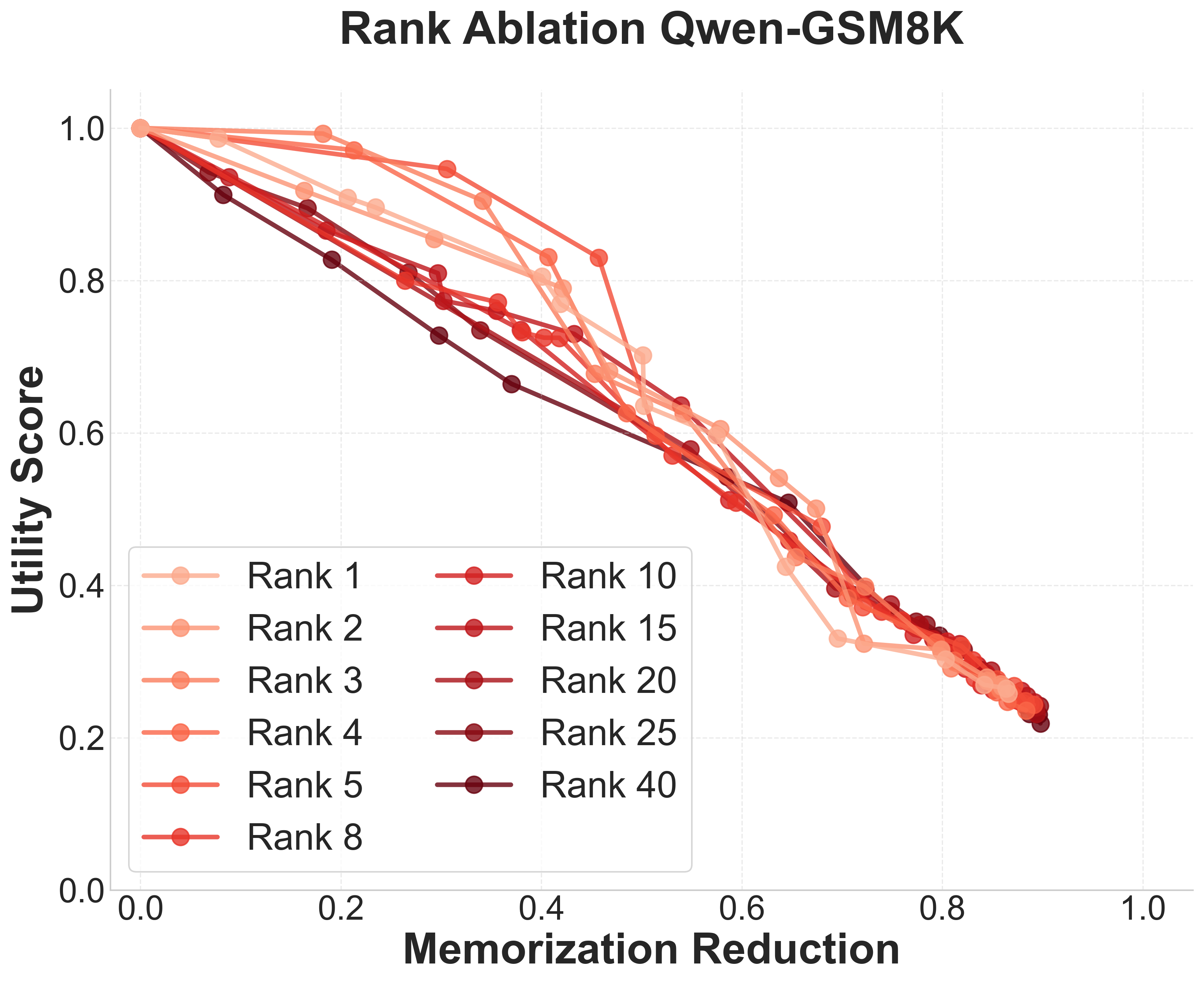}
        \label{fig:rank_pareto_qwen}
    \end{subfigure}
    \caption{Ablation Study on Subspace Rank ($k$). We visualize the Pareto frontier of utility score vs. memorization reduction on Qwen and Llama. While curves are clustered in the low-utility region, \textbf{lower ranks ($k < 5$)} achieve better performance in the high-utility regime.}
    \label{fig:ablation_rank}
\end{figure}
\textbf{Dialogue Tasks (UltraChat).} 
For dialogue generation ( \cref{fig:pareto_frontier}c and d), \textit{Unlearning methods} prove surprisingly effective, likely because open-ended dialogue is more robust to global parameter perturbations. 
However, our method continues to outperform \textit{Knowledge Editing} baselines. 
Crucially, in the moderate-memorization region (where memorization is not severe), our GSS maintains the most favorable trade-off, enabling precise removal of memorized sequences with slight utility boost.

\subsection{Ablation study on the gating threshold $\epsilon$.} 
    In GSS, $\epsilon$ serves as the sensitivity boundary for triggering interventions (i.e., intervention occurs only when $|s_{t,k}| > \epsilon$). 
As shown in Fig.~\ref{fig:epsilon}, the {Accuracy (red dashed line)} remains robust ($\sim 97\%$) across the entire range, demonstrating that our Gated Subspace Steering effectively disentangles memorization intervention from general capabilities.

\subsection{Ablation study on Rank choices.} We analyze the impact of subspace rank $k$ on the mitigation-utility trade-off in Figure~\ref{fig:ablation_rank}. While all ranks perform similarly in the low-utility region, \textbf{lower ranks ($k \le 5$)} clearly outperform higher ranks ($k \ge5$) in the high-utility regime. This confirms that memorization manifests to be low-dimensional; increasing $k$ captures spurious directions that degrade generalization.

\section{Conclusion}

We characterize memorization in language models as a \emph{sparse and token-conditioned phenomenon}. Distinct from classical machine unlearning and knowledge editing, memorization arises intermittently rather than as a static property of model parameters, motivating targeted and conditional intervention.
Our proposed GSS decomposes memorization mitigation into probing and steering, with the optimal probe--steer directions derived from a principled optimization framework. Experiments demonstrate state-of-the-art memorization reduction with minimal impact on utility.

\paragraph{Acknowledgment.}  This work was partially funded by the NSERC Discovery Grant RGPIN-2022-05316, NSERC Alliance Grant ALLRP 602633-24, Tri-Agency Canada IITP, and the Ministry of Science and ICT (No. RS-2024-00445087), CIFAR AI Chair Awards, and Canada Research Chair Fellowship.

\clearpage
\newpage
\bibliographystyle{assets/plainnat}
\bibliography{paper}

\clearpage

\newpage
\appendix
\onecolumn

\section*{Appendix}

\paragraph{Summary.}
The Table list of this appendix is listed as follows:

\begin{itemize}[leftmargin=*]
    \item \textbf{Sec.~\ref{app:derivation}}: Detailed derivations and theoretical proofs,
    including the geometry-aware constrained optimization,
    whitening-based SVD solution, adaptive steering coefficient,
    excess code length interpretation, and rank-$K$ optimality results.
    \item \textbf{Sec.~\ref{app:complexity}}: Computational and memory complexity analysis.
    \item \textbf{Sec.~\ref{app:algorithm}}: Complete algorithmic description of
    Gated Subspace Steering, covering token classification, subspace calibration,
    and inference-time gating.
    \item \textbf{Sec.~\ref{app:Connections}}: Methodological connections to LoRA and
    interpretation of activation steering as context-dependent
    low-rank adaptation.
    \item \textbf{Sec.~\ref{app:whitening_analysis}}: Geometry-aware whitening analysis,
    providing theoretical motivation and empirical visualization of memorization
    separation in anisotropic latent spaces.
    \item \textbf{Sec.~\ref{app:experimental_setup}}: Experimental setup details,
    including dataset construction, baseline protocols, and evaluation settings.
    \item \textbf{Sec.~\ref{app:layer_ablation}}: Additional experimental results and
    ablation studies, including layer selection, gating analysis, and qualitative examples.
    \item \textbf{Sec.~\ref{app:ref_sensitivity}}: Sensitivity analysis with respect
    to the choice, capacity, and training stage of the reference model.
    \item \textbf{Sec.~\ref{app:gate}}: Ablation comparing gated and ungated
    interventions, analyzing the effect of token-level gating on memorization
    suppression and utility preservation.
    \item \textbf{Sec.~\ref{app:impact}}: Broader impact discussion, limitations,
    and future research directions.
\end{itemize}
% -----------------------------------------------------------------------------
% APPENDIX A: THEORETICAL FRAMEWORK AND PROOFS
% -----------------------------------------------------------------------------

\section{Detailed Derivation and Proofs}
\label{app:derivation}

 We unify the notation such that $u \in \mathbb{R}^d$ denotes the \textbf{Probe} direction (subject to safety constraints) and $v \in \mathbb{R}^d$ denotes the \textbf{Steer} direction (unit norm), consistent with the problem formulation in the main text.

\subsection{Derivation of the Optimal Subspace}
\label{app:geo_derivation}

Here we provide the detailed proof for Theorem~\ref{thm:optimal_solution}, demonstrating how the constrained optimization problem over an ellipsoidal region reduces to a standard Singular Value Decomposition (SVD) via a whitening transformation.

\subsubsection{Primal Optimization Problem}

Recall the optimization objective defined in Eq.~(\ref{eq:app_primal}). We seek to maximize the alignment with the aggregated memorization matrix $M_{mem}$ while keeping the probe's variance on the generalization manifold $\Sigma_{gen}$ bounded:

\begin{equation}
\label{eq:app_primal}
\begin{aligned}
    \max_{u, v} \quad & \mathcal{J}(u, v) = u^\top M_{mem} v \\
    \text{s.t.} \quad & u^\top \Sigma_{gen} u \le \delta, \\
                      & \|v\|_2 = 1.
\end{aligned}
\end{equation}

Since $\Sigma_{gen} = \mathbb{E}[h h^\top]$ is a covariance matrix, it is symmetric positive semi-definite (PSD). We assume $\Sigma_{gen}$ is full-rank (strictly positive definite) to ensure invertibility. In practice, a small jitter $\gamma I$ can be added for numerical stability.

\subsubsection{Change of Basis via Whitening}

To solve this, we transform the constraint into a canonical form. Let $\Sigma_{gen} = L L^\top$ be the Cholesky decomposition, where $L$ is a lower triangular matrix.
We introduce a change of variables for the probe vector $u$. Let:
\begin{equation}
    \tilde{u} = L^\top u \quad \iff \quad u = L^{-\top} \tilde{u}.
\end{equation}
Substituting this into the safety constraint:
\begin{equation}
    u^\top \Sigma_{gen} u = (L^{-\top} \tilde{u})^\top (L L^\top) (L^{-\top} \tilde{u}) = \tilde{u}^\top (L^{-1} L) (L^\top L^{-\top}) \tilde{u} = \tilde{u}^\top I \tilde{u} = \|\tilde{u}\|_2^2.
\end{equation}
Thus, the ellipsoidal constraint $u^\top \Sigma_{gen} u \le \delta$ is transformed into an isotropic ball constraint $\|\tilde{u}\|_2^2 \le \delta$ in the transformed space.

\subsubsection{Transforming the Objective}

Next, we substitute $u = L^{-\top} \tilde{u}$ into the objective function:
\begin{equation}
    \mathcal{J}(\tilde{u}, v) = (L^{-\top} \tilde{u})^\top M_{mem} v = \tilde{u}^\top (L^{-1} M_{mem}) v.
\end{equation}
Let us define the \textbf{Whitened Memorization Matrix} as:
\begin{equation}
    M_{op} \coloneqq L^{-1} M_{mem}.
\end{equation}
The optimization problem can now be rewritten in terms of $\tilde{u}$ and $v$:
\begin{equation}
\begin{aligned}
    \max_{\tilde{u}, v} \quad & \tilde{u}^\top M_{op} v \\
    \text{s.t.} \quad & \|\tilde{u}\|_2 \le \sqrt{\delta}, \\
                      & \|v\|_2 = 1.
\end{aligned}
\end{equation}

\subsubsection{Solution via SVD}

The objective $\tilde{u}^\top M_{geo} v$ represents a bilinear form. By the definition of the Singular Value Decomposition (SVD), for any matrix $A$, the unit vectors maximizing $x^\top A y$ are the principal left and right singular vectors of $A$.

Let the SVD of the whitened matrix be $M_{geo} = U \Lambda V^\top$, where $\tilde{u}_1$ (the first column of $U$) and $\tilde{v}_1$ (the first column of $V$) correspond to the largest singular value $\sigma_1$.

\begin{enumerate}
    \item \textbf{Optimal Direction for $\tilde{u}$:} The direction maximizing the inner product is $\tilde{u}^*_{dir} = \tilde{u}_1$. Since the objective is linear in magnitude and we wish to maximize it, we scale this vector to the boundary of the feasible set (the $\sqrt{\delta}$-ball):
    \begin{equation}
        \tilde{u}^* = \sqrt{\delta} \cdot \tilde{u}_1.
    \end{equation}
    
    \item \textbf{Optimal Direction for $v$:} The optimal steering direction is simply the principal right singular vector:
    \begin{equation}
        v^* = \tilde{v}_1.
    \end{equation}
\end{enumerate}

\subsubsection{Recovering the Original Coordinates}

Finally, we map the optimal probe $\tilde{u}^*$ back to the original representation space using the inverse change of basis $u = L^{-\top} \tilde{u}$:
\begin{equation}
    u^* = L^{-\top} (\sqrt{\delta} \cdot \tilde{u}_1) = \sqrt{\delta} L^{-\top} \tilde{u}_1.
\end{equation}
This concludes the derivation. The solution implies that the optimal probe $u^*$ is the direction that, after accounting for the correlation structure of the generalization data ($L^{-1}$), aligns best with the memorization matrix.

\subsection{Derivation of the Adaptive Coefficient $\alpha$}
\label{app:alpha_derivation}

We derive the adaptive intervention coefficient $\alpha$ used in our gated activation steering mechanism.
Unlike prior fixed-strength projection methods, $\alpha$ is \emph{token-dependent} and only activates when a memorization-related signal exceeds a calibrated threshold.

Let $h \in \mathbb{R}^d$ denote the activation at a target layer and token position.
Our method learns a probe--steer pair $(u, v)$, where:
$u$ acts as a \emph{sensor} detecting memorization-related activations,
and $v$ specifies the \emph{direction along which memorization manifests}.
The intervention takes the form
\begin{equation}
\label{eq:alpha_intervention}
    h' = h - \alpha\, v,
\end{equation}
where $\alpha \ge 0$ is determined adaptively at inference time.
We require that after intervention, the probe response along $v$ does not exceed a safe margin $\epsilon$:
\begin{equation}
\label{eq:zero_signal_condition}
    \langle h', u \rangle \le \epsilon.
\end{equation}
Substituting Eq.~\eqref{eq:alpha_intervention} yields
\begin{equation}
\label{eq:zero_signal_ineq}
    \langle h, u \rangle - \alpha \langle v, u \rangle \le \epsilon.
\end{equation}

Rather than using the boundary solution of the inequality in
Eq.~\eqref{eq:zero_signal_ineq}, we derive a numerically stable coefficient
by fitting the post-intervention probe response to the desired target level.
Specifically, we choose $\alpha$ to minimize a Tikhonov-regularized residual:
\begin{equation}
\label{eq:alpha_ridge_obj}
    \alpha^\star
    \;=\;
    \arg\min_{\alpha \in \mathbb{R}}
    \Big(\langle h, u \rangle - \alpha \langle v, u \rangle - \epsilon\Big)^2
    \;+\;
    \delta\, \alpha^2,
\end{equation}
where $\delta>0$ prevents instability when $\langle v,u\rangle$ is small.

Taking the derivative of Eq.~\eqref{eq:alpha_ridge_obj} and setting it to zero:
\begin{align}
    0
    &=
    \frac{\partial}{\partial \alpha}
    \left[
        \Big(\langle h, u \rangle - \alpha \langle v, u \rangle - \epsilon\Big)^2
        + \delta \alpha^2
    \right] \nonumber\\
    &=
    -2 \langle v, u \rangle
    \Big(\langle h, u \rangle - \alpha \langle v, u \rangle - \epsilon\Big)
    + 2\delta \alpha. 
\end{align}
Rearranging gives

\begin{equation}
\label{eq:alpha_solution}
    \alpha_k
    \;=\;
    \frac{\langle u_k, v_k \rangle}
    {\langle u_k, v_k \rangle^2 + \delta},
\end{equation}

\subsection{Proof of Theorem Optimal Solution}
\label{app:proof_whitened}

We aim to solve the following constrained bilinear maximization problem:
\begin{equation}
\label{eq:proof_primal}
\begin{aligned}
    \max_{u, v} \quad & J(u, v) = u^\top \mathbf{M} v \\
    \text{s.t.} \quad & u^\top \Sigma_{gen} u \le \delta, \\
                      & \|v\|_2 = 1.
\end{aligned}
\end{equation}

\textbf{Step 1: Cholesky Factorization of the Constraint.} \\
Since $\Sigma_{gen}$ is a symmetric positive definite covariance matrix, we can perform the Cholesky decomposition:
\begin{equation}
    \Sigma_{gen} = L L^\top,
\end{equation}
where $L$ is a lower triangular, invertible matrix.
Substituting this into the safety constraint inequality:
\begin{equation}
\begin{aligned}
    u^\top \Sigma_{gen} u 
    &= u^\top (L L^\top) u \\
    &= (u^\top L) (L^\top u) \\
    &= (L^\top u)^\top (L^\top u) \\
    &= \| L^\top u \|_2^2.
\end{aligned}
\end{equation}
Thus, the constraint becomes $\| L^\top u \|_2^2 \le \delta$.

\textbf{Step 2: Change of Variables (Whitening Transformation).} \\
To simplify the geometry of the feasible set, we introduce a change of variables. Let:
\begin{equation}
    \tilde{u} = L^\top u.
\end{equation}
Using the property of the inverse matrix $(AB)^{-1} = B^{-1} A^{-1}$ and transpose $(A^\top)^{-1} = (A^{-1})^\top \coloneqq A^{-\top}$, we can express the original variable $u$ as:
\begin{equation}
    u = (L^\top)^{-1} \tilde{u} = L^{-\top} \tilde{u}.
\end{equation}
Under this transformation, the constraint simplifies to an isotropic ball:
\begin{equation}
    \|\tilde{u}\|_2^2 \le \delta \implies \|\tilde{u}\|_2 \le \sqrt{\delta}.
\end{equation}

\textbf{Step 3: Transforming the Objective Function.} \\
We now substitute $u = L^{-\top} \tilde{u}$ into the objective function $J(u, v)$.
Applying the transpose rule $(AB)^\top = B^\top A^\top$:
\begin{equation}
\begin{aligned}
    u^\top \mathbf{M} v 
    &= (L^{-\top} \tilde{u})^\top \mathbf{M} v \\
    &= \tilde{u}^\top (L^{-\top})^\top \mathbf{M} v \\
    &= \tilde{u}^\top \left( (L^{-1})^\top \right)^\top \mathbf{M} v \\
    &= \tilde{u}^\top L^{-1} \mathbf{M} v.
\end{aligned}
\end{equation}
Let us define the \textbf{Whitened Memorization Matrix} as $\mathbf{M}_{op} \coloneqq L^{-1} \mathbf{M}$.
The optimization problem is now reformulated in the whitened space:
\begin{equation}
\label{eq:proof_transformed}
\begin{aligned}
    \max_{\tilde{u}, v} \quad & \tilde{u}^\top \mathbf{M}_{op} v \\
    \text{s.t.} \quad & \|\tilde{u}\|_2 \le \sqrt{\delta}, \\
                      & \|v\|_2 = 1.
\end{aligned}
\end{equation}

\textbf{Step 4: Solving the Inner Optimization over $\tilde{u}$.} \\
Consider a fixed unit vector $v$. The objective is the inner product between $\tilde{u}$ and the vector $w = \mathbf{M}_{op} v$:
\begin{equation}
    \max_{\|\tilde{u}\|_2 \le \sqrt{\delta}} \langle \tilde{u}, w \rangle.
\end{equation}
By the Cauchy-Schwarz inequality, $\langle \tilde{u}, w \rangle \le \|\tilde{u}\|_2 \|w\|_2$. The maximum is achieved when $\tilde{u}$ is perfectly aligned with $w$ and lies on the boundary of the feasible set (maximum magnitude).
Thus, the optimal $\tilde{u}^*$ (conditional on $v$) is:
\begin{equation}
    \tilde{u}^*(v) = \sqrt{\delta} \cdot \frac{\mathbf{M}_{op} v}{\|\mathbf{M}_{op} v\|_2}.
\end{equation}
Substituting this back, the objective value becomes:
\begin{equation}
    J(\tilde{u}^*(v), v) = \left( \sqrt{\delta} \frac{(\mathbf{M}_{op} v)^\top}{\|\mathbf{M}_{op} v\|_2} \right) \mathbf{M}_{op} v = \sqrt{\delta} \frac{\|\mathbf{M}_{op} v\|_2^2}{\|\mathbf{M}_{op} v\|_2} = \sqrt{\delta} \|\mathbf{M}_{op} v\|_2.
\end{equation}

\textbf{Step 5: Solving the Outer Optimization over $v$.} \\
The problem reduces to maximizing the norm of the matrix-vector product:
\begin{equation}
    \max_{\|v\|_2=1} \sqrt{\delta} \|\mathbf{M}_{op} v\|_2.
\end{equation}
By the definition of the induced matrix 2-norm (spectral norm), this maximum is achieved when $v$ corresponds to the direction of the largest singular value of $\mathbf{M}_{op}$.
Let the Singular Value Decomposition (SVD) of $\mathbf{M}_{op}$ be:
\begin{equation}
    \mathbf{M}_{op} = \tilde{U} \Lambda \tilde{V}^\top = \sum_{i} \sigma_i \tilde{u}_i \tilde{v}_i^\top.
\end{equation}
The optimal steer direction is the first right singular vector:
\begin{equation}
    v^* = \tilde{v}_1.
\end{equation}
Correspondingly, the optimal direction for $\tilde{u}$ aligns with the first left singular vector $\tilde{u}_1$:
\begin{equation}
    \tilde{u}^* = \sqrt{\delta} \cdot \tilde{u}_1.
\end{equation}

\textbf{Step 6: Reverting to Original Coordinates.} \\
Finally, we map the solution $\tilde{u}^*$ back to the original probe space using the relation derived in Step 2 ($u = L^{-\top} \tilde{u}$):
\begin{equation}
    u^* = L^{-\top} (\sqrt{\delta} \tilde{u}_1) = \sqrt{\delta} L^{-\top} \tilde{u}_1.
\end{equation}
\hfill $\square$

\subsection{Proof of Theorem Excess code length}
\label{app:proof_excess_codelen}

\paragraph{Setup.}
To motivate the token-level surrogate used in the main text,
we begin by revisiting the information-theoretic definition of memorization.
We start from the information-theoretic definition:
\begin{equation}
\mathrm{Mem}(X)\triangleq I(X;\Theta\mid\Theta_{\mathrm{ref}})
=H(X\mid\Theta_{\mathrm{ref}})-H(X\mid\Theta).
\end{equation}
The difficulty is that $H(X\mid\Theta)$ and $H(X\mid\Theta_{\mathrm{ref}})$
involve expectations over the conditional distributions of parameters
(e.g., posteriors) and are not directly computable for large models.
We therefore derive a standard computable surrogate.

\paragraph{Step 1: Coding interpretation of entropy.}
By the Shannon source coding theorem, the optimal expected code length for
samples from a distribution $q$ using an optimal code is $H(q)$.
More generally, when encoding samples from $q$ with a code designed for a
model distribution $p$, the expected codelength equals the cross-entropy:
\begin{equation}
\mathbb{E}_{x\sim q}\big[-\log p(x)\big]
=
H(q) + \mathrm{KL}(q\|p).
\label{eq:cross_entropy_identity}
\end{equation}
Thus, $-\log p(x)$ can be interpreted as a (model-based) code length for $x$,
and its expectation under the data distribution yields a codelength proxy.

\paragraph{Step 2: Cross-entropy surrogates for conditional entropies.}
Let $p_{\mathrm{data}}$ denote the data-generating distribution.
We approximate conditional entropies by cross-entropies under trained models:
\begin{align}
H(X\mid\Theta)
&\approx
\mathbb{E}_{x\sim p_{\mathrm{data}}}\big[-\log p_\theta(x)\big], \label{eq:condH_theta_surrogate}\\
H(X\mid\Theta_{\mathrm{ref}})
&\approx
\mathbb{E}_{x\sim p_{\mathrm{data}}}\big[-\log p_{\mathrm{ref}}(x)\big], \label{eq:condH_ref_surrogate}
\end{align}
where $p_\theta$ represents the trained model instance and
$p_{\mathrm{ref}}$ represents a reference model intended to capture
generalizable structure. This is a standard operationalization used when the
true conditional entropies are intractable.

\paragraph{Step 3: Deriving excess code length / log-likelihood ratio.}
Substituting Eqs.~(\ref{eq:condH_theta_surrogate})--(\ref{eq:condH_ref_surrogate})
into the MI-based definition yields:
\begin{align}
\mathrm{Mem}(X)
&\approx
\mathbb{E}_{x\sim p_{\mathrm{data}}}\Big[-\log p_{\mathrm{ref}}(x)\Big]
-
\mathbb{E}_{x\sim p_{\mathrm{data}}}\Big[-\log p_\theta(x)\Big]
\nonumber\\
&=
\mathbb{E}_{x\sim p_{\mathrm{data}}}\Big[\log p_\theta(x)-\log p_{\mathrm{ref}}(x)\Big]
\nonumber\\
&=
\mathbb{E}_{x\sim p_{\mathrm{data}}}\left[\log\frac{p_\theta(x)}{p_{\mathrm{ref}}(x)}\right]
\;\triangleq\;
\mathrm{Mem}(p_\theta\mid p_{\mathrm{ref}}).
\end{align}
This proves the first claim.

\paragraph{Step 4: Token-wise decomposition for autoregressive LMs.}
For autoregressive language models,
$p_\theta(x)=\prod_{t=1}^T p_\theta(x_t\mid x_{<t})$, hence
\begin{align}
\log\frac{p_\theta(x)}{p_{\mathrm{ref}}(x)}
&=
\sum_{t=1}^T \left(
\log p_\theta(x_t\mid x_{<t})
-
\log p_{\mathrm{ref}}(x_t\mid x_{<t})
\right),
\end{align}
which yields a token-level contribution
$\omega_t=\log p_\theta(x_t\mid x_{<t})-\log p_{\mathrm{ref}}(x_t\mid x_{<t})$.
This completes the proof.
\qed

{Derivation of the Token-wise Memorization Decomposition}
Starting from the excess code length view,
\begin{equation}
\mathrm{Mem}(p_\theta \mid p_{\mathrm{ref}})
\approx
\mathbb{E}_{x\sim \mathcal{D}}
\left[\log p_\theta(x) - \log p_{\mathrm{ref}}(x)\right],
\end{equation}
where the approximation follows from interpreting entropy via cross-entropy (expected code length)
under a chosen sequence distribution $\mathcal{D}$.
For an autoregressive LM, the chain rule gives an exact factorization
\begin{equation}
\log p_\theta(x) = \sum_{t=1}^{T}\log p_\theta(x_t \mid x_{<t}),\qquad
\log p_{\mathrm{ref}}(x) = \sum_{t=1}^{T}\log p_{\mathrm{ref}}(x_t \mid x_{<t}).
\end{equation}
Substituting yields the token-wise decomposition
\begin{equation}
\mathrm{Mem}(p_\theta \mid p_{\mathrm{ref}})
\approx
\sum_{t=1}^{T}\mathbb{E}_{x\sim \mathcal{D}}
\left[
\log p_\theta(x_t \mid x_{<t}) - \log p_{\mathrm{ref}}(x_t \mid x_{<t})
\right],
\end{equation}
which motivates defining the token-level memorization signal
$\omega_t \triangleq \log p_\theta(x_t \mid x_{<t}) - \log p_{\mathrm{ref}}(x_t \mid x_{<t})$.

\begin{theorem}[Optimal Memorization Suppression Subspace]
\label{thm:svd}
Consider the optimization
\begin{equation}
\label{eq:rankK_opt}
\max_{\Delta H \in \mathbb{R}^{d\times d}}
\;\; \langle M, \Delta H \rangle_F
\quad \text{s.t.}\quad
\mathrm{rank}(\Delta H)\le K,\;\; \|\Delta H\|_F \le 1,
\end{equation}
where $\langle A,B\rangle_F \triangleq \mathrm{tr}(A^\top B)$.
Let $M = U \Sigma V^\top$ be the SVD with singular values
$\sigma_1 \ge \cdots \ge \sigma_d \ge 0$.
Then an optimal solution is achieved by choosing $\Delta H$ whose left/right
singular subspaces align with the top-$K$ singular vectors of $M$, i.e.,
$\Delta H^\star = U_K \Lambda V_K^\top$ for some diagonal $\Lambda$,
and the optimal value equals $\|\sigma_{1:K}\|_2$.
In particular, one maximizer is
\begin{equation}
\label{eq:deltaH_star}
\Delta H^\star
=
U_K
\left(\frac{\Sigma_K}{\|\Sigma_K\|_F}\right)
V_K^\top.
\end{equation}
\end{theorem}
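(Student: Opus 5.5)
The plan is to prove matching upper and lower bounds on the optimal value $\|\sigma_{1:K}\|_2$. The lower bound comes from direct evaluation of the proposed $\Delta H^\star$. The upper bound comes from reducing an arbitrary rank-$K$ feasible point to a $K\times K$ compressed problem and applying Cauchy--Schwarz.

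\textbf{Achievability.} Assume first that $\Sigma_K\neq 0$. The matrix $\Delta H^\star$ in Eq.~\eqref{eq:deltaH_star} has rank at most $K$ and Frobenius norm exactly $1$, because $U_K$ and $V_K$ have orthonormal columns. Using $M=U\Sigma V^\top$, $U^\top U_K=[I_K;0]$ and $V^\top V_K=[I_K;0]$, we get
\[
\langle M,\Delta H^\star\rangle_F=\mathrm{tr}\!\left(V\Sigma U^\top U_K\Sigma_K V_K^\top\right)/\|\Sigma_K\|_F=\mathrm{tr}(\Sigma_K^2)/\|\Sigma_K\|_F=\|\sigma_{1:K}\|_2 .
\]
If instead $\sigma_1=\dots=\sigma_K=0$, then $M$ has rank less than $K$. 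In that case every feasible $\Delta H$ gives objective value at most $0$ by the upper bound below, so $\Delta H=0$, which is of the form $U_K\Lambda V_K^\top$ with $\Lambda=0$, is optimal.

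\textbf{Upper bound.} Take any feasible $\Delta H$ with $\mathrm{rank}(\Delta H)\le K$. Factor it through its (padded) compact SVD as $\Delta H=PYQ^\top$, where $P,Q\in\mathbb{R}^{d\times K}$ have orthonormal columns and $Y\in\mathbb{R}^{K\times K}$ satisfies $\|Y\|_F=\|\Delta H\|_F\le 1$. Then
\[
\langle M,\Delta H\rangle_F=\langle P^\top M Q,\,Y\rangle_F\le \|P^\top MQ\|_F\,\|Y\|_F\le \|P^\top MQ\|_F .
\]
It therefore suffices to show $\|P^\top MQ\|_F^2\le\sum_{i\le K}\sigma_i^2$. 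For this I will use the singular value interlacing inequality for compressions: $\sigma_i(P^\top MQ)\le\sigma_i(M)$ for $i=1,\dots,K$. This follows from the Courant--Fischer min--max characterization, since $\sigma_i(P^\top M Q)$ is a min--max of $\|P^\top M Q x\|\le\|MQx\|$ over subspaces of $\mathbb{R}^K$, and these map isometrically via $Q$ into subspaces of $\mathbb{R}^d$. Squaring and summing the $K$ inequalities gives the desired bound.

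An alternative route to the same upper bound is von Neumann's trace inequality, $\langle M,\Delta H\rangle_F\le\sum_i\sigma_i(M)\sigma_i(\Delta H)$. Since only $K$ of the $\sigma_i(\Delta H)$ are nonzero, Cauchy--Schwarz then bounds the right-hand side by $\|\sigma_{1:K}\|_2\,\|\Delta H\|_F$. I would present the compression argument as the main proof and mention von Neumann's inequality as a remark.

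The main obstacle is the interlacing step $\sigma_i(P^\top MQ)\le\sigma_i(M)$, or equivalently invoking von Neumann's inequality; this is the only non-elementary ingredient. Everything else is bookkeeping with orthonormal columns. I would also note explicitly that the theorem claims existence of a maximizer, not uniqueness. When $\sigma_K=\sigma_{K+1}$ the top-$K$ singular subspaces are not unique, so the phrase ``align with the top-$K$ singular vectors'' should be read as ``for some choice of SVD''.
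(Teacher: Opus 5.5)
Your proof is correct. The achievability computation is the same as the paper's: align $\tilde U=U$, $\tilde V=V$ and take $\lambda_i=\sigma_i/\|\sigma_{1:K}\|_2$. Your main upper-bound argument, however, takes a different route. The paper writes $\Delta H=\tilde U\Lambda\tilde V^\top$ and applies von Neumann's trace inequality to get $\langle M,\Delta H\rangle_F\le\sum_{i\le K}\sigma_i\lambda_i$. It then finishes with Cauchy--Schwarz on the singular-value vectors, which is exactly the route you relegate to a remark. You instead compress to the $K\times K$ problem $\langle P^\top MQ,Y\rangle_F$ and use Frobenius Cauchy--Schwarz. After that you need only $\sigma_i(P^\top MQ)\le\sigma_i(M)$, which is immediate from Courant--Fischer or from $\sigma_i(ABC)\le\|A\|_2\,\sigma_i(B)\,\|C\|_2$ with $\|P\|_2=\|Q\|_2=1$. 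This is more elementary than von Neumann, whose standard proofs need a majorization or doubly-stochastic-matrix argument. It also makes the structure of the problem transparent: for fixed row and column subspaces the best core is $Y\propto P^\top MQ$, so the problem reduces to maximizing $\|P^\top MQ\|_F$ over pairs of $K$-dimensional subspaces. The paper's route is shorter once von Neumann is granted, and it reads off the optimal singular-value profile $\lambda\propto\sigma_{1:K}$ directly from the equality case. Two points you cover are skipped by the paper's proof: the degenerate case, where $\Sigma_K/\|\Sigma_K\|_F$ in \eqref{eq:deltaH_star} is $0/0$, and non-uniqueness when $\sigma_K=\sigma_{K+1}$. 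One small correction: $\sigma_1=\cdots=\sigma_K=0$ in fact forces $M=0$, not merely $\mathrm{rank}(M)<K$.
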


\begin{proof}
Let the SVDs be $M = U\Sigma V^\top$ and $\Delta H = \tilde U \Lambda \tilde V^\top$,
where $\Lambda=\mathrm{diag}(\lambda_1,\dots,\lambda_d)$ with
$\lambda_1\ge\cdots\ge\lambda_d\ge 0$ and $\lambda_{k}=0$ for all $k>K$
(because $\mathrm{rank}(\Delta H)\le K$).
Using cyclicity of trace,
\begin{align}
\langle M,\Delta H\rangle_F
&= \mathrm{tr}(M^\top \Delta H)
= \mathrm{tr}(V\Sigma U^\top \tilde U \Lambda \tilde V^\top) \nonumber\\
&= \mathrm{tr}\!\Big(\Sigma \underbrace{(U^\top \tilde U)}_{\triangleq Q}\;
\Lambda\; \underbrace{(\tilde V^\top V)}_{\triangleq R}\Big),
\label{eq:trace_QR}
\end{align}
where $Q$ and $R$ are orthogonal matrices.

By the von Neumann trace inequality, for any matrices $A,B$,
$\mathrm{tr}(A^\top B)\le \sum_i s_i(A)s_i(B)$, where $s_i(\cdot)$ denotes singular
values sorted in nonincreasing order. Applying it to~\eqref{eq:trace_QR} yields
\begin{equation}
\langle M,\Delta H\rangle_F
\le \sum_{i=1}^d \sigma_i \lambda_i
= \sum_{i=1}^K \sigma_i \lambda_i,
\label{eq:vn_bound}
\end{equation}
since $\lambda_i=0$ for $i>K$.

Now impose the Frobenius constraint:
$\|\Delta H\|_F^2=\sum_{i=1}^d \lambda_i^2=\sum_{i=1}^K \lambda_i^2\le 1$.
By Cauchy--Schwarz,
\begin{equation}
\sum_{i=1}^K \sigma_i \lambda_i
\le
\left(\sum_{i=1}^K \sigma_i^2\right)^{1/2}
\left(\sum_{i=1}^K \lambda_i^2\right)^{1/2}
\le
\left(\sum_{i=1}^K \sigma_i^2\right)^{1/2}
=
\|\sigma_{1:K}\|_2.
\label{eq:cs_bound}
\end{equation}

Finally, the upper bound in~\eqref{eq:cs_bound} is achievable by (i) aligning
singular vectors, i.e. $\tilde U = U$ and $\tilde V = V$ so that equality holds
in von Neumann's inequality, and (ii) choosing
$\lambda_i = \sigma_i / \|\sigma_{1:K}\|_2$ for $i\le K$ and $0$ otherwise,
which saturates Cauchy--Schwarz and satisfies $\|\Delta H\|_F=1$.
This yields the closed form in~\eqref{eq:deltaH_star}.
Therefore, $\Delta H^\star$ lies in the span of the top-$K$ singular vectors of
$M$ and is optimal for~\eqref{eq:rankK_opt}.
\end{proof}

\subsection{Linear vs. ReLU Gating Approximation}

While the derivation in Appendix~\ref{app:geo_derivation} assumes a linear system for tractability, in practice we use a one-sided ReLU~\cite{fukushima2007visual} gate to only target specific activations:
\begin{equation}
    \text{Gate}(h) = \text{ReLU}(u^\top h - \tau).
\end{equation}
The solution $(u^*, v^*)$ derived under linear assumptions provides the optimal ``first-order'' direction for the non-linear gate. The geometric alignment determined by $M_{geo}$ captures the axis of maximum variation. Even with a ReLU nonlinearity, the principal axis $u^*$ remains the direction that maximizes the expected value $\mathbb{E}[\text{ReLU}(u^\top h) \cdot (h^\top v)]$, assuming the data distribution is locally symmetric around the decision boundary.

\section{Computational Complexity Analysis}
\label{app:complexity}

The computational overhead of Gated Subspace Steering (GSS) is bifurcated into a one-time offline calibration phase and an inference-time intervention phase. We demonstrate that while the calibration involves matrix decompositions, the per-token inference overhead is negligible relative to the Transformer backbone.

\subsection{Offline Calibration Phase}
The offline construction of the steering subspace involves computing the statistics and performing decomposition. Let $N$ be the number of tokens in the calibration corpus $\mathcal{D}$, $d$ be the hidden dimension, and $K$ be the subspace rank.

\begin{itemize}
    \item \textbf{Statistic Estimation:} Computing the covariance $\mathbf{\Sigma}_{gen} \in \mathbb{R}^{d \times d}$ and the matrix $\mathbf{M} \in \mathbb{R}^{d \times d}$ requires $O(N d^2)$ operations. This is highly parallelizable and performed once per model.
    \item \textbf{Geometry-Aware Decomposition:} 
    \begin{enumerate}
        \item The Cholesky decomposition of $\mathbf{\Sigma}_{gen}$ takes $O(d^3)$.
        \item The SVD of the whitened matrix $\tilde{\mathbf{M}}$ takes $O(d^3)$.
    \end{enumerate}
\end{itemize}
For a standard LLM (e.g., $d=4096$), these $O(d^3)$ operations take only a few seconds on a modern GPU/CPU, making the offline phase computationally feasible even for large-scale models.

\subsection{Online Inference Phase}
The core strength of GSS lies in its minimal impact on generation latency. During a single forward pass at layer $l$, the intervention $\text{ForwardStep}(h_t^{(l)})$ introduces the following operations:

\begin{enumerate}
    \item \textbf{Projection:} Computing $s_{t,k} = (u_k^*)^\top h_t^{(l)}$ for $k=1, \dots, K$ requires $K$ inner products, resulting in $O(Kd)$ complexity.
    \item \textbf{Gated Accumulation:} The conditional update $\Delta h = \sum \lambda \cdot s_{t,k} \cdot v_k^*$ involves $K$ scalar-vector multiplications, also resulting in $O(Kd)$ complexity.
\end{enumerate}

\paragraph{Relative Overhead.} A standard Transformer layer is dominated by the Multi-Head Attention (MHA) and Feed-Forward Network (FFN) projections, which scale as $O(d^2)$. The relative overhead $\Gamma$ of GSS per token is:
\begin{equation}
\Gamma = \frac{\text{Complexity}(\text{GSS})}{\text{Complexity}(\text{Transformer Layer})} \approx \frac{O(Kd)}{O(d^2)} = O\left(\frac{K}{d}\right)
\end{equation}
In practical settings where $K \ll d$ (e.g., $K=50, d=4096$), the overhead is $\Gamma \approx 1.2\%$. Furthermore, since $U^*$ and $V^*$ are fixed after calibration, the operation can be implemented as a single fused kernel, further minimizing I/O latency.

\subsection{Memory Complexity}
GSS requires storing two basis matrices $U^*, V^* \in \mathbb{R}^{d \times K}$ and a threshold vector $\vec{\epsilon} \in \mathbb{R}^K$. The total memory footprint is $O(2Kd)$. Given $K \ll d$, this is several orders of magnitude smaller than the model weights, typically occupying less than 1MB of VRAM, making it suitable for resource-constrained deployment.

\clearpage

\section{Algorithm}
\label{app:algorithm}

\begin{algorithm}[H]
\caption{Gated Subspace Steering (GSS)}
\label{alg:gsp}

\begin{algorithmic}[1]
\STATE \textbf{Input:} Pre-trained LLM $p_\theta$, reference model $p_{\text{ref}}$, target layer $l$
\STATE \textbf{Data:} Training or calibration corpus $\mathcal{D}$
\STATE \textbf{Hyperparameters:} Subspace rank $K$, variance budget $\delta$
\vspace{0.15cm}

\STATE \textit{\gray{// Phase 0: Token Classification via Memorization Signal}}
\FOR{each sequence $x \in \mathcal{D}$}
    \FOR{each token position $t$}
        \STATE $\omega_t \leftarrow \log p_\theta(x_t \mid x_{<t}) - \log p_{\text{ref}}(x_t \mid x_{<t})$
        \IF{$\omega_t > 0$}
            \STATE assign $h_t^{(l)} \rightarrow \mathcal{D}_{mem}$
        \ELSE
            \STATE assign $h_t^{(l)} \rightarrow \mathcal{D}_{gen}$
        \ENDIF
    \ENDFOR
\ENDFOR

\vspace{0.15cm}
\STATE \textit{\gray{// Phase 1: Offline Subspace Construction}}

\STATE \textbf{Procedure: CalibrateSubspace($\mathcal{D}_{mem}, \mathcal{D}_{gen}$)}
\STATE \hspace{0.5em} \textbf{1. Compute Statistics:}
\STATE \hspace{1.5em} $\mathbf{\Sigma}_{gen} \leftarrow
\mathbb{E}_{h \sim \mathcal{D}_{gen}}
[(h^{(l)}-\mu)(h^{(l)}-\mu)^\top]$
\STATE \hspace{1.5em} $\mathbf{M} \leftarrow
\mathbb{E}_{h \sim \mathcal{D}_{mem}}
[h^{(l)}(\nabla_{h^{(l)}} \mathcal{L}_{mem})^\top]$

\STATE \hspace{0.5em} \textbf{2. Geometry-Aware Decomposition:}
\STATE \hspace{1.5em} $L \leftarrow \text{Cholesky}(\mathbf{\Sigma}_{gen})$
\STATE \hspace{1.5em} $\tilde{\mathbf{M}} \leftarrow L^{-1}\mathbf{M}$
\STATE \hspace{1.5em} $\tilde{U}, S, \tilde{V} \leftarrow \text{SVD}(\tilde{\mathbf{M}})$

\STATE \hspace{0.5em} \textbf{3. Extract Optimal Directions:}
\STATE \hspace{1.5em} $U^* \leftarrow \sqrt{\delta}\, L^{-\top}\tilde{U}_{:,1:K}$
\STATE \hspace{1.5em} $V^* \leftarrow \tilde{V}_{:,1:K}$

\STATE \hspace{0.5em} \textbf{4. Calibrate Thresholds:}
\FOR{$k=1$ {\bf to} $K$}
    \STATE \hspace{1.5em} $S_{gen} \leftarrow \{ |(u_k^*)^\top h| : h \in \mathcal{D}_{gen} \}$
    \STATE \hspace{1.5em} $\epsilon_k \leftarrow \text{Percentile}(S_{gen}, 99\%)$
\ENDFOR
\STATE \hspace{0.5em} \textbf{Output:} $U^*, V^*, \{\epsilon_k\}$

\vspace{0.15cm}
\STATE \textit{\gray{// Phase 2: Inference-Time Intervention}}

\STATE \textbf{Procedure: ForwardStep($h_t^{(l)}$, $U^*$, $V^*$, $\{\epsilon_k\}$)}
\STATE \hspace{0.5em} $\Delta h \leftarrow 0$
\FOR{$k=1$ {\bf to} $K$}
    \STATE \hspace{0.5em} $s_{t,k} \leftarrow (u_k^*)^\top h_t^{(l)}$
    \IF{$|s_{t,k}| > \epsilon_k$}
        \STATE \hspace{0.5em} $\Delta h \leftarrow \Delta h + s_{t,k} \cdot v_k^*$
    \ENDIF
\ENDFOR
\STATE \hspace{0.5em} \textbf{Output:} $h_t^{(l)} - \Delta h$

\end{algorithmic}
\end{algorithm}

% -----------------------------------------------------------------------------
% APPENDIX B: METHODOLOGICAL CONNECTIONS
% -----------------------------------------------------------------------------
\section{Methodological Connections}
\label{app:Connections}

\subsection{Connection to LoRA}

We elaborate on the connection between gated activation steering and low-rank adaptation (LoRA).

\subsubsection{LoRA Recap}

LoRA~\cite{hu2022lora} fine-tunes a pretrained model by learning low-rank updates:
\begin{equation}
W' = W + BA
\end{equation}

where $B \in \mathbb{R}^{d \times r}$, $A \in \mathbb{R}^{r \times d}$, and $r \ll d$.

During inference:
\begin{equation}
y = W'x = Wx + BAx
\end{equation}

The key idea is that task adaptation can be captured by a low-rank subspace.

\subsubsection{Activation Steering as Dynamic LoRA}

Consider our intervention without gating:
\begin{equation}
h'_t = h_t - \langle h_t, v \rangle \cdot u = (I - uv^T) h_t
\end{equation}

If applied before a weight matrix $W$:
\begin{align}
Wh'_t &= W(I - uv^T)h_t \\
&= (W - Wuv^T)h_t \\
&= (W + \Delta W)h_t
\end{align}

where $\Delta W = -Wuv^T$ is rank-1.

\textbf{Key difference from LoRA:}
\begin{enumerate}
    \item \textbf{Direction}: LoRA adds $\Delta W = BA$; we subtract a specific rank-1 matrix determined by the memorization geometry.
    \item \textbf{Optimization}: LoRA learns $B, A$ through gradient descent on a task loss; we derive $u, v$ through closed-form SVD based on memorization analysis.
    \item \textbf{Dynamics}: LoRA is static ($\Delta W$ fixed); we are dynamic (effective $\Delta W$ varies with $h_t$ through gating).
\end{enumerate}

\subsubsection{Dynamic LoRA Formulation}

With gating $\mathcal{G}$:
\begin{equation}
h'_t = h_t - \mathcal{G}(\langle h_t, v \rangle) \cdot u
\end{equation}

The effective weight update is:
\begin{equation}
\Delta W_t = -\mathcal{G}(\langle h_t, v \rangle) \cdot W u v^T
\end{equation}

This is a \textbf{context-dependent rank-1 update} that varies with each token.

\textbf{Why is this better than static LoRA for memorization?}

Static LoRA cannot distinguish between memorized and generalized tokens—it applies the same modification uniformly. Our approach:
\begin{enumerate}
    \item Activates strongly ($\mathcal{G}$ large) when $\langle h_t, v \rangle$ is large (indicating memorization)
    \item Deactivates ($\mathcal{G} \approx 0$) when $\langle h_t, v \rangle$ is small (indicating generalization)
    \item Achieves selective intervention without degrading normal inference
\end{enumerate}

\subsection{Multi-Rank Extension}

We can extend to rank-$k$ by using the top $k$ singular vector pairs:
\begin{equation}
h'_t = h_t - \sum_{i=1}^k \mathcal{G}(\langle h_t, v_i \rangle) \cdot u_i
\end{equation}

This is equivalent to a dynamic rank-$k$ LoRA:
\begin{equation}
\Delta W_t = -W \sum_{i=1}^k \mathcal{G}(\langle h_t, v_i \rangle) \cdot u_i v_i^T
\end{equation}

In practice, rank-1 is often sufficient, and higher ranks provide diminishing returns (see ablations in \cref{sec:experiments}).

\section{Geometry-Aware Whitening for Memorization Separation}
\label{app:whitening_analysis}

\begin{figure*}[t]
\centering
\includegraphics[width=\textwidth]{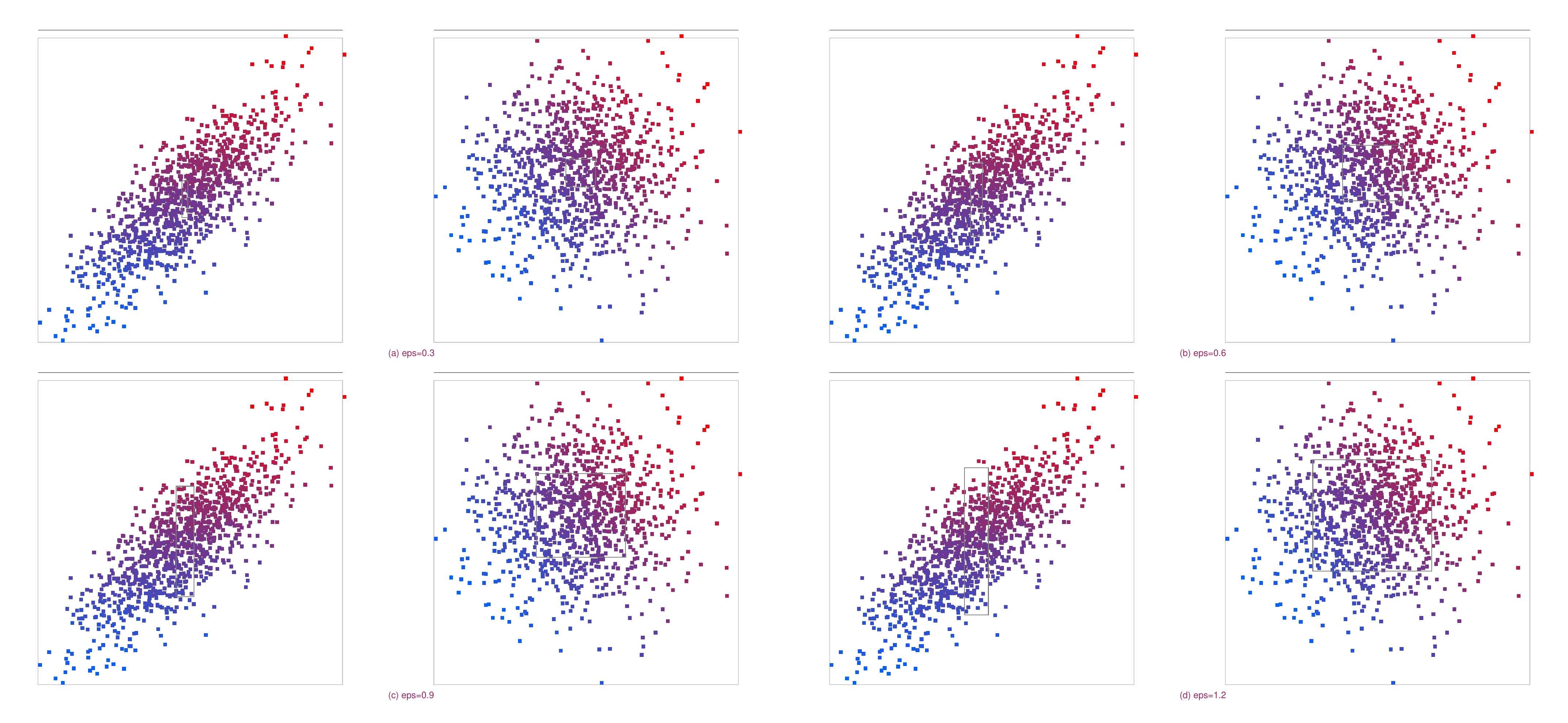} 
\caption{\textbf{Visualization of Latent Geometry with and without GSS Whitening.} 
    (a) In the unwhitened space, the memorization signal (colored by $\omega$) is eclipsed by high-variance noise along the major principal components. 
    (b) GSS whitening $\mathbf{L}^{-1}$ recovers an isotropic geometry, enabling the gating thresholds (dashed lines) to isolate memorized tokens based on signal-to-noise ratio rather than raw variance. Red and blue hues indicate high and low memorization scores, respectively.}
\label{fig:whiten}
\end{figure*}

In this section, we provide a theoretical and empirical justification for the whitening transform $\mathbf{L}^{-1}$ employed in Phase 1 of Algorithm \ref{alg:gsp}. We argue that the inherent anisotropy of LLM activation distributions necessitates a geometry-aware approach to isolate subtle memorization signals from dominant background variance.

\subsection{The Challenge of Anisotropic Latent Geometry}
Large Language Models exhibit "narrow cone" hidden state distributions, where a few principal components capture the vast majority of the variance $\mathbf{\Sigma}$. Let $h \in \mathbb{R}^d$ be a hidden state, and $\omega(h)$ be a latent property (e.g., memorization strength). In practice, the direction $v_{\text{mem}}$ associated with $\omega$ often aligns with the low-variance axes (minor components) of $\mathbf{\Sigma}$. 

When applying a steering probe $u$ in the original space, the projection $s = u^\top h$ is susceptible to high-variance noise. The Signal-to-Noise Ratio (SNR) can be characterized as:
\begin{equation}
    \text{SNR}_{\text{raw}} = \frac{\text{Var}(u^\top h_{\text{signal}})}{\text{Var}(u^\top h_{\text{noise}})} \approx \frac{u^\top \mathbf{M} u}{u^\top \mathbf{\Sigma}_{gen} u}.
\end{equation}
In an anisotropic setting where the eigenvalues of $\mathbf{\Sigma}_{gen}$ span several orders of magnitude, a hard threshold $|s| > \epsilon$ will predominantly trigger on task-irrelevant fluctuations along major components, leading to high false-positive rates in the gating mechanism.

\subsection{Whitening as Signal-to-Noise Optimization}
GSS addresses this by performing a Cholesky-based whitening transform. Let $\mathbf{\Sigma}_{gen} = \mathbf{L}\mathbf{L}^\top$. We map the hidden states to a canonical space: $h_{\text{white}} = \mathbf{L}^{-1} h$. In this space, the covariance becomes isotropic, $\text{Cov}(h_{\text{white}}) \approx \mathbf{I}$, which effectively "equalizes" the energy across all dimensions.

The SVD in Phase 1 then extracts the optimal steering basis $U^*$ from the whitened memorization matrix $\tilde{\mathbf{M}}$. This ensures that the gating threshold $\epsilon_k$ corresponds to a \textbf{statistical distance} (Mahalanobis distance) rather than a raw Euclidean magnitude. As demonstrated in Figure \ref{fig:whiten}, whitening transforms the skewed, mixed-signal distribution into a spherical geometry where the memorization signal $\omega$ becomes linearly separable.

\subsection{Synthetic Empirical Verification}
To visualize this effect, we conducted a synthetic experiment . We generated $h \sim \mathcal{N}(0, \mathbf{\Sigma})$ with a condition number $\kappa(\mathbf{\Sigma}) > 50$ and embedded a weak signal $\omega$. 

\begin{itemize}[leftmargin=*]
    \item \textbf{Unwhitened View:} The latent cloud is highly elongated. The gating region (defined by $\pm \epsilon$) is flooded by noise from the high-variance axis, failing to isolate points with high $\omega$ values.
    \item \textbf{Whitened View:} The distribution is rescaled into a unit sphere. The high-$\omega$ points (color-coded in red/blue) align clearly along the extracted steering axis. The gating mechanism now acts as a precise filter, selecting only tokens where the semantic signal exceeds the normalized variance budget.
\end{itemize}

This transformation is critical for steering: it ensures that the intervention $\Delta h$ is applied only when the model deviates from its generative manifold toward a memorized state, preserving general capabilities while effectively suppressing unwanted knowledge leakage.

% -----------------------------------------------------------------------------
% APPENDIX C: EXPERIMENTAL SETUP
% -----------------------------------------------------------------------------
\section{Experimental Setup Details}
\label{app:experimental_setup}
\begin{table}[t]
\centering
\scriptsize
\caption{Comparison of Unlearning and Editing Methods in Terms of Prior Supervision Requirements.}
\label{tab:unlearning_assumptions}
\begin{tabular}{lccccl}
\toprule
\textbf{Method} 
& \textbf{Requires Forget Set} 
& \textbf{Requires Reference / Retain Set} 
& \textbf{Granularity} 
& \textbf{Primary Mechanism} 
\\
\midrule
LUNAR 
& \checkmark 
& \checkmark 
& Sequence-level 
& Contrastive activation redirection 
\\
GA / NPO 
& \checkmark 
& \texttimes 
& Sequence-level 
& Gradient ascent on forget loss 
\\
DPO (Unlearning) 
& \checkmark 
& \texttimes 
& Sequence-level 
& Preference optimization (IDK vs. answer) 
\\
ROME / MEMIT 
& \texttimes 
& \texttimes 
& Fact-level 
& Local parameter editing 
 \\
Ours 
& \texttimes 
& \texttimes 
& Token-level 
& Memorization-aware steering 
 \\
\bottomrule
\end{tabular}
\end{table}
\subsection{Machine Unlearning}

Machine unlearning aims to remove the influence of a subset of training data from a trained model without retraining from scratch~\cite{nguyen2025survey, yao2024machine,chen2023fast,tarun2023fast,xu2025unlearning,huo2025mmunlearner,wang2025selective}. While exact unlearning requires the updated model to be indistinguishable from one retrained on the retained data, such guarantees are generally infeasible for large neural networks. As a result, most practical methods adopt approximate unlearning, including gradient ascent on a designated forget set or influence-based approximations of parameter updates. These approaches primarily operate in parameter space and rely on global optimization signals, which often lead to instability, over-forgetting, or performance degradation on retained data. Moreover, they are less effective when the undesired knowledge is sparse or manifests through localized behaviors such as memorization, motivating alternatives that target internal representations more directly.

\subsection{Knowledge Editing}

Knowledge editing focuses on modifying specific behaviors or factual associations of pretrained language models while preserving overall performance~\cite{hu2024wilke,jiang2024learning,jiang2025anyedit}. Representative methods such as ROME~\cite{meng2022locating} and MEMIT~\cite{meng2022mass} follow a locate-and-edit paradigm, identifying internal components responsible for a target fact and applying targeted weight updates. While effective for explicit, human-interpretable factual knowledge, these methods assume that the edited behavior can be localized to a small set of parameters and expressed as a direct counterfactual replacement. In contrast, behaviors such as memorization or data leakage are often distributed across tokens and contexts, making them difficult to address via parameter editing alone. This has motivated representation-level interventions, which aim to control model behavior by manipulating hidden activations rather than permanently modifying weights.

\subsection{Main Baseline Methods}
\label{app:baseline_methods}

This section describes all baseline methods evaluated in our experiments.
Unless otherwise specified, all methods are applied to the same base model,
use identical prompts and decoding settings, and are evaluated with the same
memorization and utility metrics.
\paragraph{Task Vector.}
Task Vector constructs a parameter-space direction by fine-tuning a copy of the
base model on memorization-related data and subtracting the resulting weight
difference from the original model.
Formally, a task vector is computed as $\Delta\theta=\theta_{\mathrm{ft}}-\theta$,
and the edited model is obtained via $\theta'=\theta-\alpha\Delta\theta$.
This method performs a global parameter update and does not adapt interventions
to individual tokens or contexts.

Given a pre-trained language model with parameters $\theta$, we construct a task vector
using the following steps:

\begin{enumerate}
    \item \textbf{Model Copy.} We create a deep copy of the original model, denoted $\theta_{\mathrm{ft}}$.
    \item \textbf{Fine-tuning on Unlearn Set.} 
    The copied model is fine-tuned on the unlearning dataset $\mathcal{D}_{\mathrm{unlearn}}$
    using standard causal language modeling loss for a small number of epochs.
    This step is intentionally short and aims only to capture the parameter update direction
    associated with the memorized samples, rather than to reach full convergence.
    \item \textbf{Task Vector Construction.}
    The task vector is defined as the parameter difference
    \begin{equation}
        \Delta \theta = \theta_{\mathrm{ft}} - \theta .
    \end{equation}
    \item \textbf{Model Arithmetic (Unlearning).}
    The final unlearned model is obtained by subtracting the scaled task vector:
    \begin{equation}
        \theta' = \theta - \alpha \, \Delta \theta ,
    \end{equation}
    where $\alpha > 0$ controls the strength of unlearning.
\end{enumerate}

Unless otherwise specified, we use AdamW with learning rate $1 \times 10^{-2}$,
batch size $32$, no weight decay, and fine-tune for $10$ epochs.
The scaling factor $\alpha$ is tuned per experiment and reported in the corresponding tables.
Task vectors are computed using a full-parameter difference across all layers.
To ensure a non-trivial effect in large models, we apply an additional global scaling
factor to the task vector during subtraction.
This does not change the direction of the update, only its magnitude, and follows
common practice in task arithmetic baselines.
Task Vector unlearning applies a global, static parameter update and does not condition
on token-level context.
As a result, it may incur substantial collateral damage on non-memorized inputs,
particularly when aggressive scaling is required to suppress memorization.

\paragraph{Negative Preference Optimization (NPO).}
NPO suppresses memorization by optimizing model parameters to reduce the
likelihood of memorized outputs relative to preferred alternatives.
The method relies on preference-style objectives and requires access to
explicit negative samples or preference pairs.
Interventions are applied through parameter updates rather than inference-time
control.

For each sequence $x$ in the unlearning set $\mathcal{D}_{\mathrm{unlearn}}$,
we first compute the sequence-level log-probability under the original model
$p_\theta$:
\begin{equation}
\log p_{\mathrm{ref}}(x) = \sum_{t} \log p_\theta(x_t \mid x_{<t}).
\end{equation}
These reference log-probabilities are pre-computed once and cached to avoid
repeated forward passes during training.

During training, we optimize the model to reduce the likelihood of unlearn samples
relative to the reference using the following loss:
\begin{equation}
\mathcal{L}_{\mathrm{NPO}}
= - \mathbb{E}_{x \sim \mathcal{D}_{\mathrm{unlearn}}}
\left[ \log \sigma \left(
\beta \left(
\log p_{\mathrm{ref}}(x) - \log p_\theta(x)
\right)
\right) \right],
\end{equation}
where $\sigma(\cdot)$ denotes the sigmoid function and $\beta$ controls the sharpness
of the preference margin.
This objective penalizes the model whenever it assigns higher likelihood to
unlearn samples than the reference model.

To mitigate catastrophic forgetting, we incorporate a retain loss computed on a
separate reference dataset $\mathcal{D}_{\mathrm{retain}}$:
\begin{equation}
\mathcal{L}_{\mathrm{retain}}
= \mathbb{E}_{x \sim \mathcal{D}_{\mathrm{retain}}}
\left[ - \log p_\theta(x) \right].
\end{equation}
The final optimization objective is:
\begin{equation}
\mathcal{L}
= \mathcal{L}_{\mathrm{NPO}}
+ \lambda_{\mathrm{retain}} \, \mathcal{L}_{\mathrm{retain}},
\end{equation}
where $\lambda_{\mathrm{retain}}$ is a large constant weighting the retain constraint.

To prevent model collapse, we monitor the retain loss throughout training.
If the retain loss exceeds three times its initial value, training is terminated early.
This emergency stop is necessary due to the aggressive nature of preference-based
objectives on small unlearning datasets.
By default, we apply LoRA adapters during NPO optimization to restrict updates to
a low-rank subspace of attention parameters.
This substantially improves training stability and reduces collateral damage.
After optimization, LoRA weights are merged back into the base model to produce
a standard dense model for evaluation.

Unless otherwise specified, we use AdamW with learning rate $1 \times 10^{-5}$,
batch size $16$, weight decay $0.01$, $\beta = 0.1$,
and retain weight $\lambda_{\mathrm{retain}} = 50$.
Models are trained for $5$ epochs.
NPO applies a sequence-level preference signal and does not distinguish
token-level memorization from benign generalization within a sequence.
As a result, strong retain regularization and early stopping are required
to maintain model utility.

\paragraph{LUNAR (Activation Redirection).}
We include LUNAR as an inference-level unlearning baseline that mitigates memorization
by explicitly redirecting internal representations at a selected transformer block.
LUNAR operates by fitting the edited model’s hidden activations on unlearn samples
to a target activation constructed from a reference distribution.

Given a target layer $l$, we first collect block-level hidden activations
$h^{(l)} \in \mathbb{R}^{B \times S \times D}$ using forward hooks.
For each batch, activations are averaged over the sequence dimension,
yielding a batch-level representation in $\mathbb{R}^{B \times D}$.
We compute mean activations separately for the unlearning set
$\mathcal{D}_{\mathrm{unlearn}}$ and the reference set
$\mathcal{D}_{\mathrm{ref}}$:
\begin{equation}
a_f = \mathbb{E}_{x \sim \mathcal{D}_{\mathrm{unlearn}}}
\left[ h^{(l)}(x) \right],
\quad
a_{\mathrm{ref}} = \mathbb{E}_{x \sim \mathcal{D}_{\mathrm{ref}}}
\left[ h^{(l)}(x) \right].
\end{equation}
The unlearning direction is defined as the difference between reference and
unlearn activations:
\begin{equation}
r = a_{\mathrm{ref}} - a_f .
\end{equation}
This vector specifies the desired displacement in activation space.
To implement redirection, we optimize the down-projection matrix
$W_{\mathrm{down}}$ (corresponding to the MLP output projection) at layer $l$,
while freezing all other parameters.
A frozen copy of the original model is used to provide a stable target.
For each unlearn batch, the objective minimizes:
\begin{equation}
\mathcal{L}
= \left\|
h^{(l)}_{\theta}(x)
-
\left(
h^{(l)}_{\theta_0}(x) + r
\right)
\right\|_2^2 ,
\end{equation}
where $\theta_0$ denotes the original model parameters and
$\theta$ the edited parameters.
This ensures a non-degenerate objective by anchoring the target to the base model’s
activations rather than a constant vector.

Optimization is performed using AdamW for a small number of epochs.
Only the MLP down-projection weights at the selected layer receive gradients;
all other parameters remain frozen.
Forward hooks are removed after optimization to produce a standard dense model.
Unless otherwise specified, LUNAR is applied to the middle transformer layer.
LUNAR applies a static, layer-wise redirection that does not condition on
token-level memorization signals.
The same redirection is applied uniformly across all unlearn inputs,
which may introduce collateral effects on non-memorized tokens.

\paragraph{Other Baselines.}
All remaining baselines are implemented by closely following the experimental
protocols and design choices described in prior work~\cite{suri2025mitigating}.
We adopt their default architectures, optimization objectives, and hyperparameter
settings whenever applicable, and only make minimal adjustments to ensure
compatibility with our models and datasets.

\subsection{Memorized Data Extraction for Pythia Models}

To construct datasets of memorized content for the Pythia 2.8B and 6.9B models, we adopt the extraction methodology proposed by \cite{suri2025mitigating,chang2024localization}. It is important to distinguish the criteria for memorization: while \cite{chang2024localization} employ a relaxed definition where the model need only ``nearly reconstruct'' the suffix from a prefix, our work adheres to the stricter Definition~2.1. Specifically, we require the language model to reproduce the suffix \textit{verbatim} via greedy decoding. Consequently, we filter the sequences identified by \cite{chang2024localization} to isolate those satisfying our rigorous conditions with context parameters $n=72$ and $k=32$.

 As expected, the plot demonstrates an inverse correlation where memorization accumulates as model perplexity---calculated here over a subset of 1,632 randomly sampled sequences from The Pile ~\cite{gao2020pile} following the sampling protocol of ~\cite{chang2024localization}---decreases effectively.

% APPENDIX D: ADDITIONAL RESULTS

\section{Additional Experimental Results}
\label{app:layer_ablation}

\subsection{Results on the Sanitation Dataset.}
Table~\ref{sanitation} reports results on the Sanitation dataset following the EasyEdit~\cite{wang2024easyedit}
evaluation protocol.
We compare our method with representative model editing approaches
(SERAC, IKE, AdaLoRA, MEND, ROME, MEMIT)
as well as a steering baseline based on mean activation directions.

Across metrics, our method achieves competitive locality and fluency
while attaining the lowest memorization rate.
In particular, compared to prior editing methods,
our approach reduces the memorization rate to $0.1191$,
matching or slightly improving over MEMIT ($0.1197$)
and substantially outperforming other baselines.
At the same time, our method preserves strong locality ($0.2015$)
and maintains fluency comparable to existing approaches,
indicating a favorable trade-off between memorization suppression
and utility preservation under the EasyEdit framework.

\subsection{Full Pilot Study Results}
\label{app:additional_pilot_results}
Figure~\ref{fig:image2} provides additional empirical analysis of the token-level memorization
signal $\omega_t$ across models and datasets.

\textbf{Distributional behavior.}
Figures~\ref{fig:image2} compare the empirical distributions of $\omega_t$
on normal data and memorized data, respectively.
On normal data, the distributions are approximately symmetric around zero,
with a large fraction of tokens lying close to the mean.
In contrast, memorized data exhibits a pronounced right-skewed distribution,
where only a small fraction of tokens contribute large positive $\omega_t$ values,
while the majority remain near zero.
This indicates that memorization is not driven by uniformly elevated confidence,
but by sparse, high-magnitude events at specific token positions.

\textbf{Burst statistics.}
Figure~\ref{fig:image2}(c) analyzes the temporal structure of memorization by measuring
the length of consecutive token bursts with $\omega_t > 0$.
Across models and datasets, memorization occurs in short bursts,
with most bursts spanning one or two tokens.
Longer bursts are rare and contribute marginally to the overall mass.
These results suggest that memorization is highly intermittent,
with frequent transitions between memorization-dominant and
generalization-dominant tokens even within memorized sequences.

Overall, these observations further support the view that memorization
is a temporally sparse and token-local phenomenon,
motivating inference-time interventions that can activate and deactivate
at fine temporal resolution.

\begin{figure}[ht] % <---
   \begin{subfigure}{0.48\textwidth}
       \includegraphics[width=\linewidth]{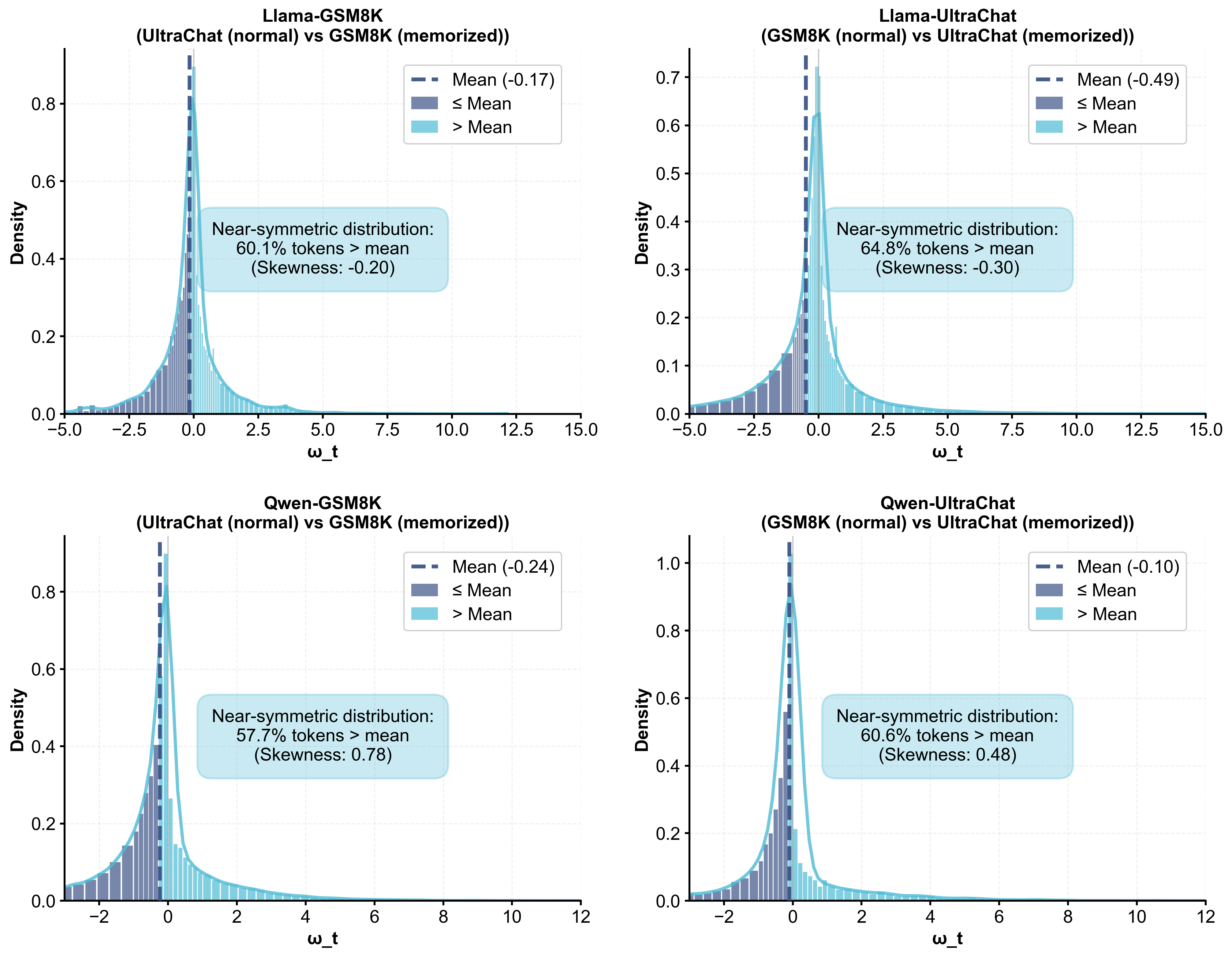}
       \caption{Normal data $\omega_t$ distribution.}
       \label{fig:app-normal}
   \end{subfigure}
\hfill % <--- 
   \begin{subfigure}{0.48\textwidth}
       \includegraphics[width=\linewidth]{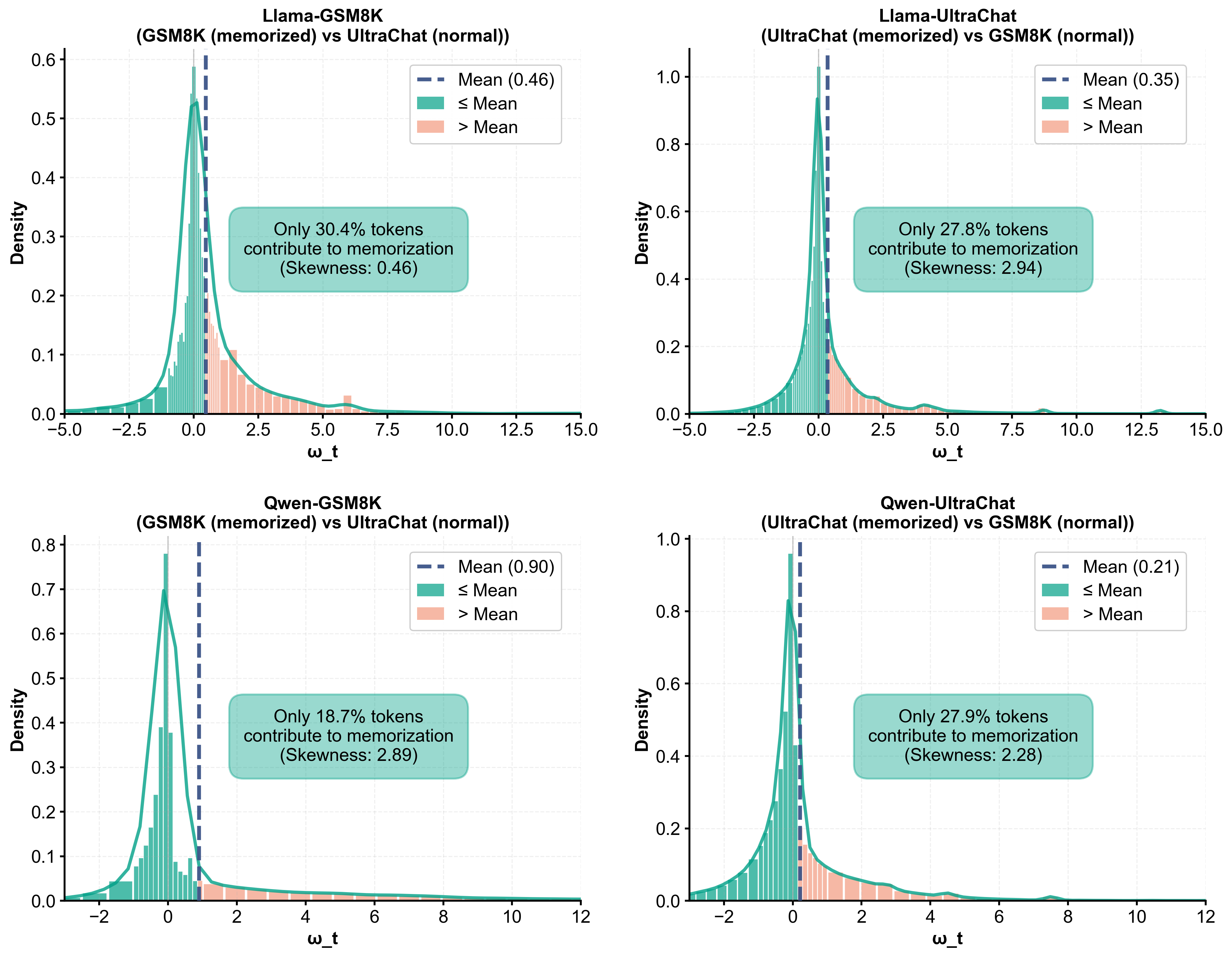}
       \caption{Memorized data $\omega_t$ distribution.}
       \label{fig:app-mem}
   \end{subfigure}
\hfill % <---
\centering
   \begin{subfigure}{0.48\textwidth}
       \includegraphics[width=\linewidth]{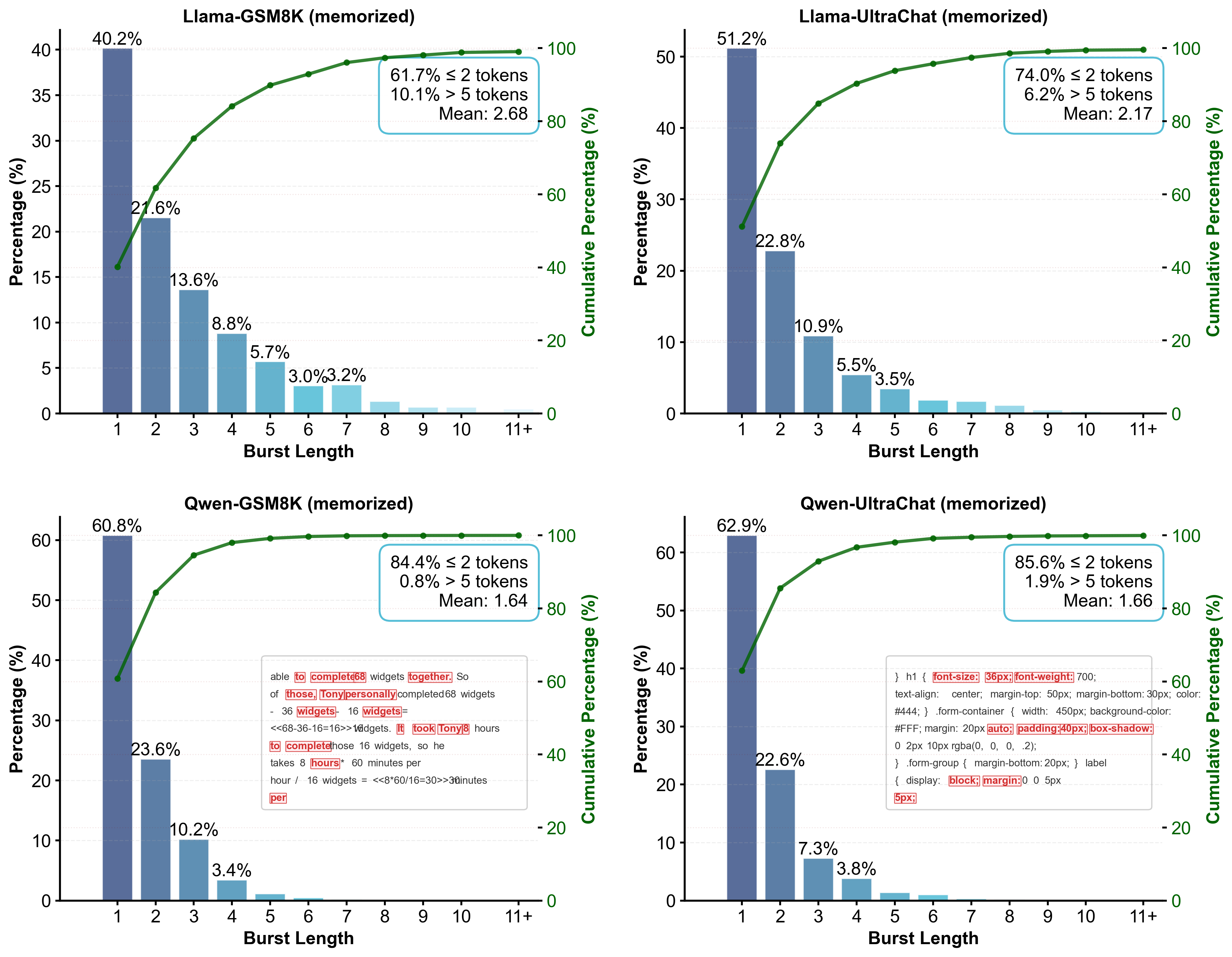}
       \caption{Memorized data burst plot.}
       \label{fig:app-burst}
   \end{subfigure}
  \caption{Additonal Pilot Study Results}
   \label{fig:image2}
\end{figure}

 \begin{table}[t]
\centering
\small
\setlength{\tabcolsep}{4pt}
\caption{Performance comparison on LLama and Qwen models, categorized by method type.
We report the best Memorization Reduction Percentage (Red.~\%) with \textbf{Utility $\ge 0.9$}.}
\begin{tabular}{c|l|cc|cc}
\toprule
\multirow{2}{*}{\textbf{Category}} & \multirow{2}{*}{\textbf{Method}} &
\multicolumn{2}{c|}{\textbf{LLama (Red.~\%)}} &
\multicolumn{2}{c}{\textbf{Qwen (Red.~\%)}} \\ \cmidrule{3-6}
 &  & \textbf{Ultrachat} & \textbf{GSM8K} & \textbf{Ultrachat} & \textbf{GSM8K} \\
\midrule
\multirow{5}{*}{\rotatebox{90}{\textbf{Unlearning}}}
 & GA & 95.5 & 4.5 & 3.3 & 0.0 \\
 & Task Vector & 77.0 & 25.1 & 66.0 & 25.4 \\
 & DPO & 95.7 & 1.3 & 82.4 & 7.3 \\
 & SimNPO & 95.5 & 2.2 & 78.8 & 7.2 \\
 & NPO & 88.8 & 2.2 & 71.5 & 6.9 \\
\midrule
\multirow{4}{*}{\rotatebox{90}{\textbf{Editing}}}
 & ROME & 38.0 & 7.8 & 8.6 & 14.5 \\
 & MEMIT & 7.5 & 2.2 & 2.2 & 8.7 \\
 & Balanced Subnet & 34.8 & 1.4 & 1.0 & 12.8 \\
 & LUNAR & 30.0 & 9.0 & 42.7 & 1.0 \\
\midrule
 & \textbf{Ours} & 48.0 & 33.5 & 51.4 & 35.2 \\
\bottomrule
\end{tabular}
\label{tab:categorized_results}
\end{table}

\subsection{Layer Selection}

Figure~\ref{fig:ablation_layer} analyzes the signal strength of the memorization probe across layers. We observe that the ratio of memorization energy to generalization peaks in the middle-to-late layers (e.g., layers 15--20). This indicates that memorization is most separable from general semantic features at this depth, making these layers optimal for targeted intervention.

\begin{figure}[ht]
    \centering
    % Left Subfigure
    \begin{subfigure}[b]{0.48\textwidth}
        \centering
        \includegraphics[width=\textwidth,trim={0cm 1.2cm 0 1cm},clip]{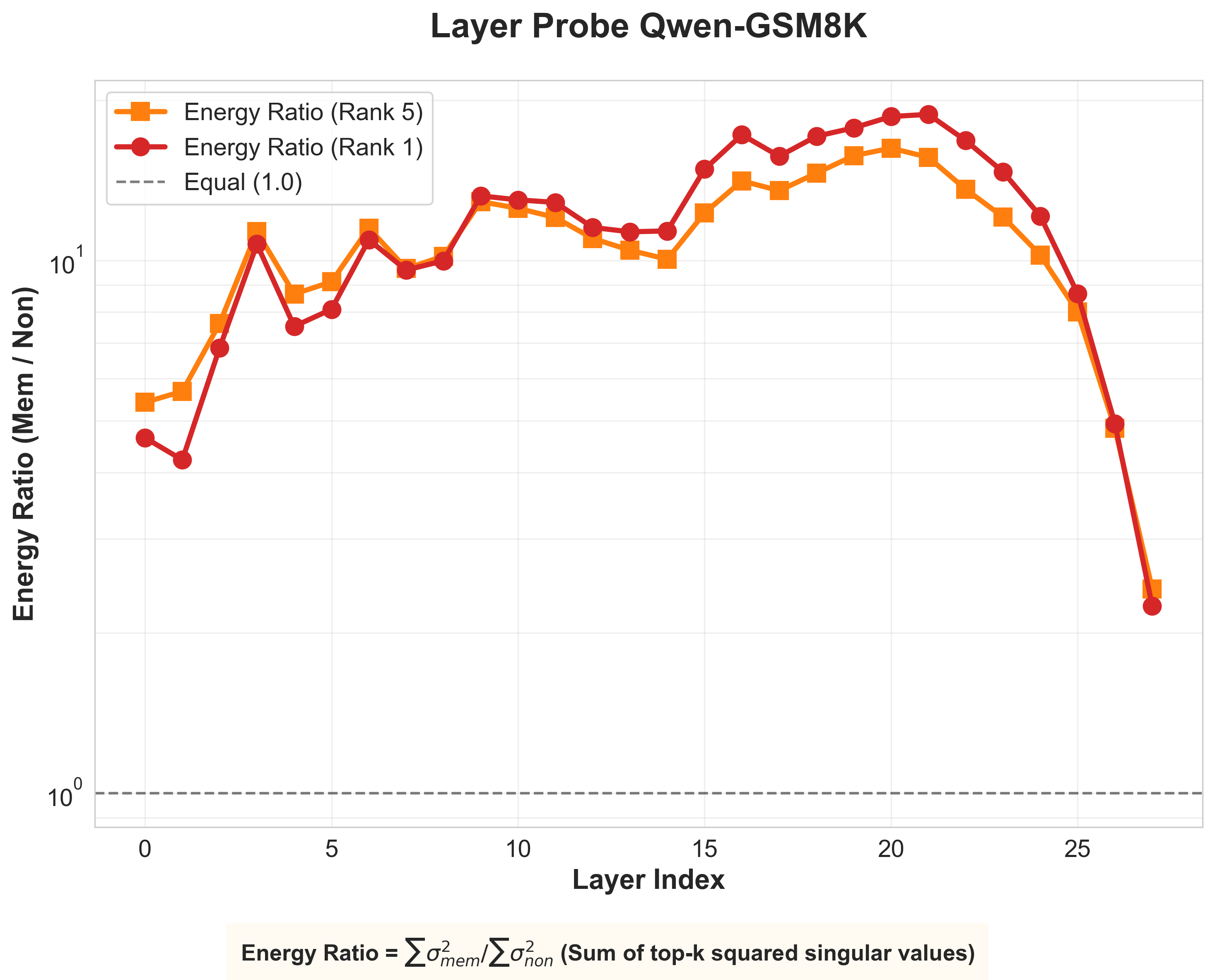}
    \end{subfigure}
    \hfill
    % Right Subfigure
    \begin{subfigure}[b]{0.48\textwidth}
        \centering
        \includegraphics[width=\textwidth,trim={0cm 1.2cm 0 1cm},clip]{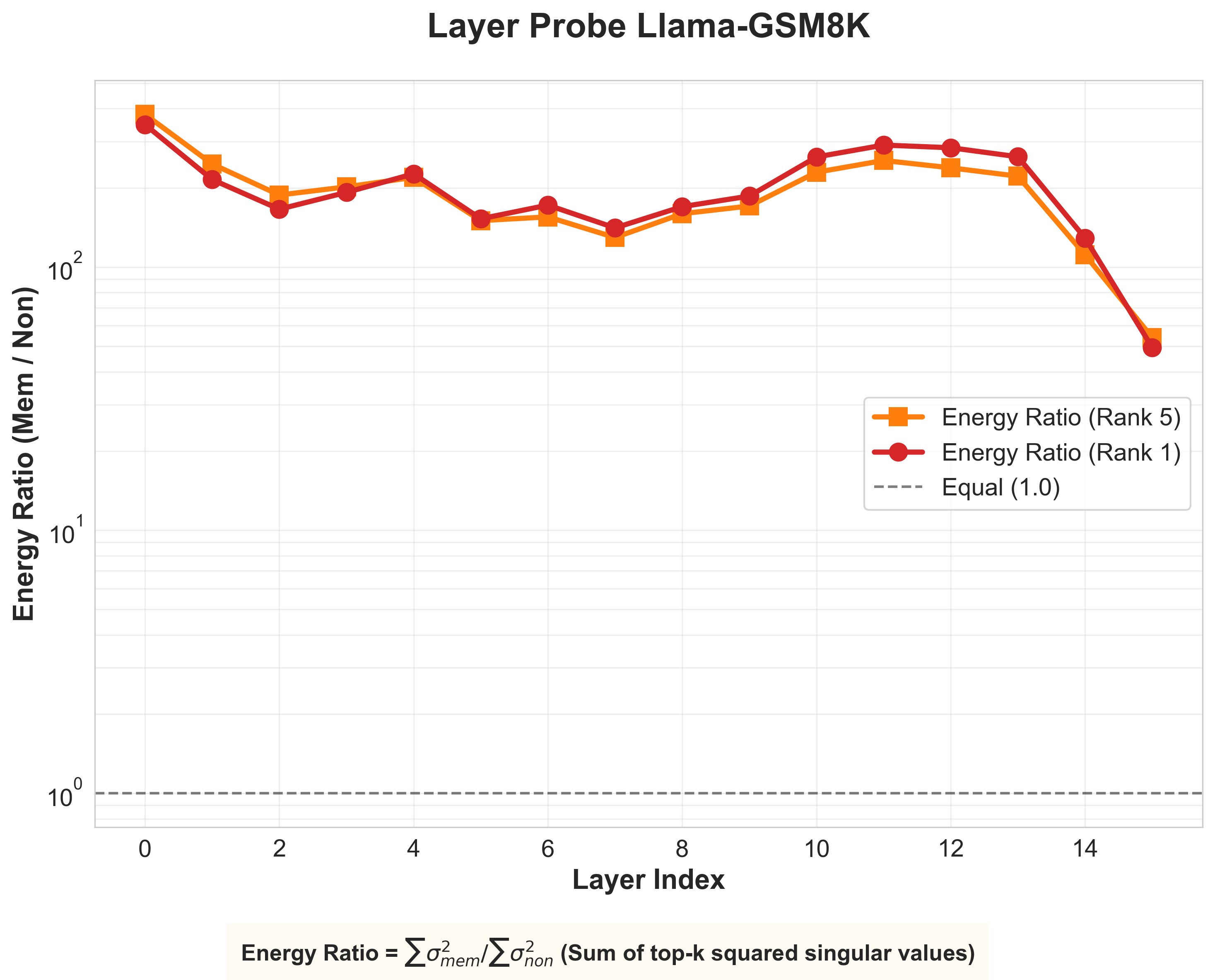}
    \end{subfigure}
    
    \caption{\textbf{Ablation study on layer selection.} Left onw shows the result for Qwen3-0.6b on GSM8K; and right one shows the result for Llama-3.2-1b on Ultrachat. The distinguishability of memorization signals peaks in middle-to-late layers, guiding our layer selection for efficient gating. For Llama-3.2-1b, due to its architectural differences, the first layer also demonstrates significant effects.}
    \label{fig:ablation_layer}
\end{figure}

\subsection{Qualitative Results}
We show representative contract question--answering examples generated by Pythia-6.9B in Fig.~\ref{fig:example}. 
Before steering, the model exhibits memorization-aligned artifacts, including nonsensical token repetitions, malformed strings, and spurious numeric outputs (highlighted in red).
After applying gated activation steering, these artifacts are selectively suppressed while preserving the overall semantic structure and correctness of the responses.
This visualization illustrates that memorization manifests intermittently at the token level and can be mitigated without degrading coherent generation.
\begin{figure*}[t]
\centering
\includegraphics[width=\textwidth]{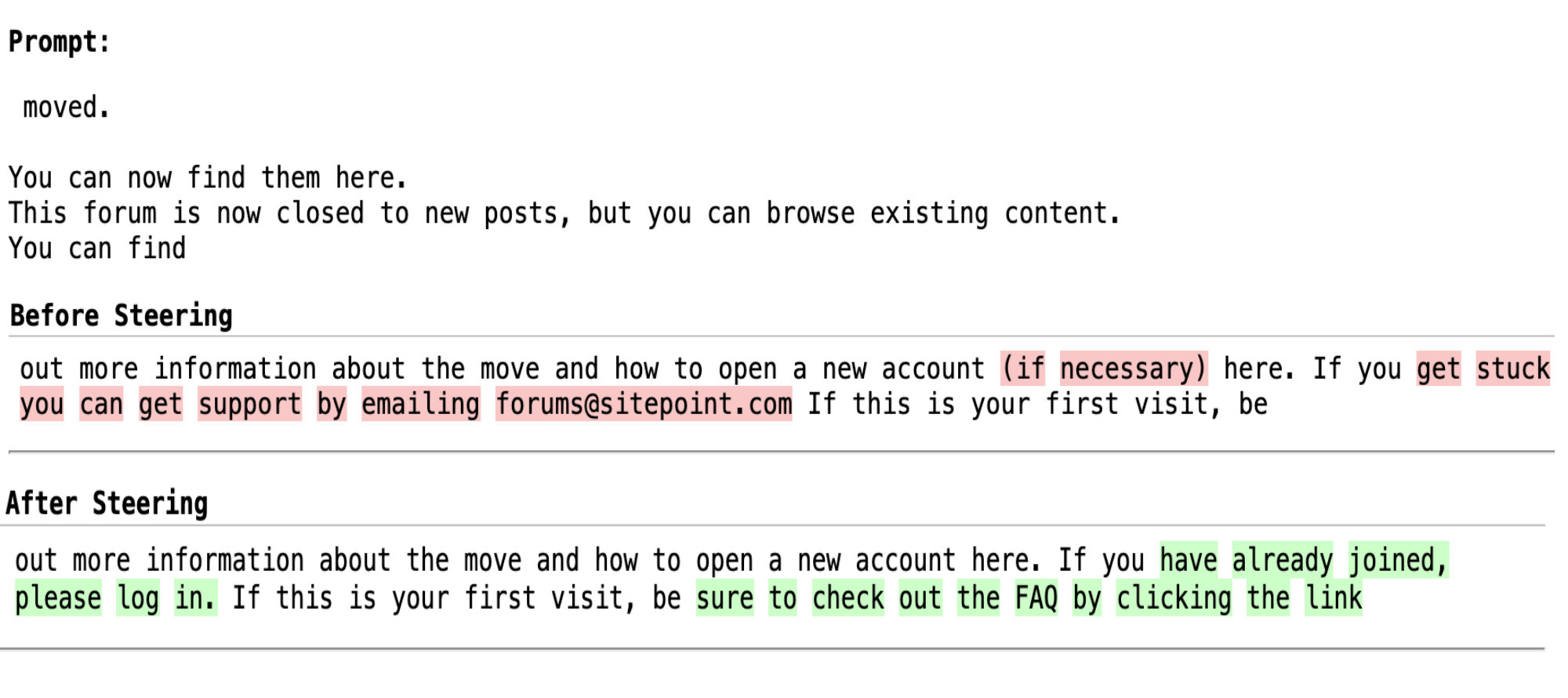} 
\caption{{Qualitative comparison of model responses before and after activation steering.}
}
\label{fig:example}
\end{figure*}

\begin{table*}[t]
\centering
\caption{Results on Sanitation~\cite{wang2024easyedit} Dataset}
\begin{tabular}{lccccccccc}
\toprule
\textbf{DataSet} & \textbf{Metric} & \textbf{SERAC} & \textbf{IKE} & \textbf{AdaLoRA} & \textbf{MEND} & \textbf{ROME} & \textbf{MEMIT} &  \textbf{Steering (Mean)} & \textbf{Ours}\\
\midrule
Sanitation 
& Locality $\uparrow$      & 0.1722 & 0.1751 & 0.2218 & 0.1650 & 0.1783 &  0.1650 &  0.1545 & 0.2015\\
& Fluency $\uparrow$     & 5.9431 & 6.0052 & 5.4572 & 6.0315 & 5.4952 &  5.5251 & 5.7143 & 5.5742\\
& Mem. Rate $\downarrow$   & 0.3617 & 0.4229 & 0.2840 & 0.3617 & 0.2859 & {0.1197}   & {0.1632} & 0.1191\\
\bottomrule
\end{tabular}
\label{sanitation}
\end{table*}

\begin{table}[t]
\centering
\scriptsize
\setlength{\tabcolsep}{3pt}
\renewcommand{\arraystretch}{0.9}
\caption{Deviation Score (DS) on \textbf{Pythia 2.8B}. 
We define $\mathrm{DS}_\lambda = 100\sqrt{m^2+\lambda\cdot \mathrm{PPLDeg}^2}$ where $m=\%\mathrm{Mem}/100$ and $\mathrm{PPLDeg}=\max(0, \mathrm{PPL}/\mathrm{PPL}_{\text{base}}-1)$ with $\mathrm{PPL}_{\text{base}}=21.75$. Lower is better.}
\label{tab:pythia_2.8b_ds}
\begin{tabular}{lccc}
\toprule
\textbf{Method} & $\mathrm{DS}_{0.01}\downarrow$ & $\mathrm{DS}_{0.05}\downarrow$ & $\mathrm{DS}_{0.1}\downarrow$ \\
\midrule
Baseline      & 52.87 & 52.87 & 52.87 \\
HC            &  9.85 & 13.47 & 16.95 \\
Slim          & 31.89 & 31.92 & 31.95 \\
Act           &  9.35 & 10.93 & 12.62 \\
Greedy        & 14.58 & 32.45 & 45.87 \\
Durable       &  9.38 & 15.73 & 21.14 \\
Durable-agg   &  8.96 & 13.24 & 17.15 \\
Subnet        &  9.34 & 13.40 & 17.18 \\
BalancedSubnet& \textbf{7.45} & 11.67 & 15.40 \\
\textbf{Ours} &  7.54 & \textbf{9.38} & \textbf{11.26} \\
\bottomrule
\end{tabular}
\end{table}

 \begin{table}[t]
\centering
\scriptsize
\setlength{\tabcolsep}{3pt}
\renewcommand{\arraystretch}{0.9}
\caption{Deviation Score (DS) on \textbf{Pythia 6.9B}. 
We define $\mathrm{DS}_\lambda = 100\sqrt{m^2+\lambda\cdot \mathrm{PPLDeg}^2}$ where $m=\%\mathrm{Mem}/100$ and $\mathrm{PPLDeg}=\max(0, \mathrm{PPL}/\mathrm{PPL}_{\text{base}}-1)$ with $\mathrm{PPL}_{\text{base}}=19.46$. Lower is better.}
\label{tab:pythia_6.9b_ds}
\begin{tabular}{lccc}
\toprule
\textbf{Method} & $\mathrm{DS}_{0.01}\downarrow$ & $\mathrm{DS}_{0.05}\downarrow$ & $\mathrm{DS}_{0.1}\downarrow$ \\
\midrule
Baseline      & 89.31 & 89.31 & 89.31 \\
HC            & 87.55 & 87.55 & 87.55 \\
Slim          & 33.46 & 33.46 & 33.46 \\
Act           & 30.30 & 30.30 & 30.30 \\
Greedy        & \textbf{7.89} & 17.42 & 24.59 \\
Durable       & 14.79 & 15.31 & \textbf{15.95} \\
Durable-agg   & 13.40 & 20.38 & 26.65 \\
Subnet        & 84.95 & 84.95 & 84.95 \\
BalancedSubnet& 86.73 & 86.73 & 86.73 \\
\textbf{Ours} &  8.56 & \textbf{13.13} & 17.22 \\
\bottomrule
\end{tabular}
\end{table}

\subsection{Hyperparameter Configurations}
We performed a comprehensive grid search to identify optimal hyperparameters for the Pythia-2.8B and Pythia-6.9B models. The search spaces for the various unlearning methods are detailed below:

\begin{itemize}[leftmargin=*]
    \item \textbf{BalancedSubnet:}
    ratio $\in \{10^{-5}, 10^{-4}, 10^{-3}, 10^{-2}, 0.05, 0.1, 0.25, 0.3\}$,
    epochs $\in \{1, 10, 20\}$,
    loss weight $\in \{0.9, 0.7, 0.5\}$,
    learning rate $\in \{10^{-4}, 10^{-3}\}$.

    \item \textbf{Subnet:}
    ratio $\in \{10^{-5}, 10^{-4}, 10^{-3}, 10^{-2}, 0.05, 0.1, 0.25, 0.3\}$,
    epochs $\in \{1, 10, 20\}$,
    learning rate $\in \{10^{-4}, 10^{-3}\}$.

    \item \textbf{HC, Slim:}
    ratio $\in \{10^{-5}, 10^{-4}, 10^{-3}, 10^{-2}, 0.05, 0.1\}$,
    epochs $\in \{1, 10, 20\}$,
    learning rate $\in \{10^{-4}, 10^{-3}\}$.

    \item \textbf{Greedy:}
    ratio $= 10^{-5}$,
    learning rate $\in \{10^{-4}, 10^{-3}\}$.

    \item \textbf{Act:}
    ratio $\in \{10^{-4}, 10^{-3}, 10^{-2}, 0.05, 0.1\}$,
    learning rate $\in \{10^{-4}, 10^{-3}\}$.

    \item \textbf{Durable, Durable-agg:}
    ratio $\in \{10^{-5}, 10^{-4}, 10^{-3}, 10^{-2}, 0.05, 0.1\}$,
    learning rate $\in \{10^{-4}, 10^{-3}\}$.
\end{itemize}

\section{Sensitivity to the Choice of Reference Model}
\label{app:ref_sensitivity}

\begin{figure}[h]
    \centering
\includegraphics[width=0.45\linewidth]{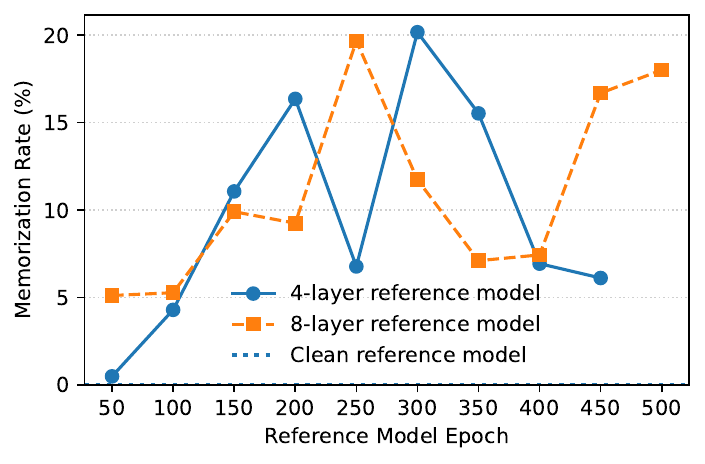}
    \caption{
    \textbf{Sensitivity to the choice of reference model.}
    Memorization rate as a function of the reference checkpoint epoch for 4-layer and 8-layer reference models.
    Using a clean (pretrained) reference consistently suppresses memorization to zero while preserving clean accuracy.
    In contrast, partially trained reference models exhibit non-monotonic sensitivity: mitigation initially weakens as memorization signals become diffuse, but partially recovers at later epochs as the remaining signal becomes more concentrated.
    Clean accuracy remains constant (96.98\%) across all settings.
    }
    \label{fig:ref_epoch_sensitivity}
\end{figure}

Our method relies on a reference model to define a token-level memorization signal
\(
\omega_t = \log p_\theta(x_t \mid x_{<t}) - \log p_{\mathrm{ref}}(x_t \mid x_{<t})
\),
which is used to localize memorization-and construct the steering subspace.
Since $\omega_t$ explicitly depends on the reference model, it is important to understand how sensitive the resulting steering vectors are to the choice of reference.
In this appendix, we analyze this sensitivity along two orthogonal axes: the \emph{training stage} (epoch) and the \emph{capacity} (number of layers) of the reference model.

\paragraph{Epoch-based sensitivity analysis.}
Figure~\ref{fig:ref_epoch_sensitivity} reports the effect of the reference model training stage on memorization mitigation under two reference capacities (4 and 8 layers).
Several consistent trends can be observed.

First, for both reference capacities, mitigation is strongest when the reference model is relatively distant from the target (e.g., Epoch~50), and generally degrades as the reference epoch increases.
This behavior aligns with the definition of the memorization signal $\omega_t$, whose magnitude and support shrink as the predictive distributions of the reference and target models become more similar.

Second, mitigation does not decrease monotonically with the reference epoch.
For example, under the 4-layer reference, memorization drops from $20.17\%$ at Epoch~300 to $6.94\%$ at Epoch~400, while a similar non-monotonic pattern is observed for the 8-layer reference, where mitigation weakens at Epoch~350 before increasing again at later epochs.
This suggests that the effectiveness of the induced steering subspace is not solely determined by the number of tokens identified as memorized, but also by the coherence of the remaining memorization signal.

Third, we observe a systematic difference between reference capacities.
The 4-layer reference exhibits more frequent degenerate or near-degenerate cases at late epochs (e.g., Epoch~500), whereas the 8-layer reference continues to induce a non-trivial steering effect even when the reference is close to the target.
This indicates that a sufficiently expressive reference model is required to reliably anchor the memorization signal across training stages.

Finally, across all reference configurations, clean accuracy remains unchanged, indicating that the observed variation in mitigation performance is not driven by utility degradation.
Overall, these results highlight a trade-off between signal sparsity and signal purity as the reference model approaches the target, and further emphasize the role of reference capacity in stabilizing memorization localization.

\section{Ungated vs Gated Subspace Steering}
\label{app:gate}
Figure~\ref{fig:gate} studies the impact of the proposed gating mechanism under varying noise levels.
Each subplot compares model perplexity between the gated and ungated variants
while keeping the memorization rate fixed at zero.
Across all noise settings, the gated model consistently achieves lower perplexity
than the ungated counterpart.
Moreover, as the number of noise samples increases, both variants exhibit a gradual
increase in perplexity, reflecting the increased difficulty of the task;
however, the performance gap between gated and ungated models remains stable.
These results indicate that the gating mechanism does not introduce additional
optimization instability and can preserve, or slightly improve, utility even
under noisy conditions.
\begin{figure}[h]
    \centering

    % ---------------- Row 1 ----------------
    \begin{subfigure}[b]{0.23\linewidth}
        \centering
        \includegraphics[width=\linewidth]{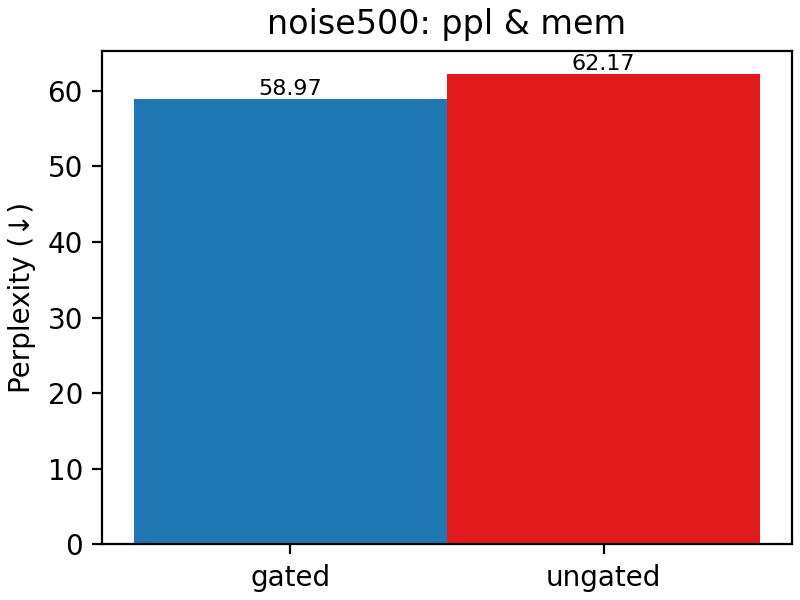}
        \caption{500 noise samples}
    \end{subfigure}\hfill
    \begin{subfigure}[b]{0.23\linewidth}
        \centering
        \includegraphics[width=\linewidth]{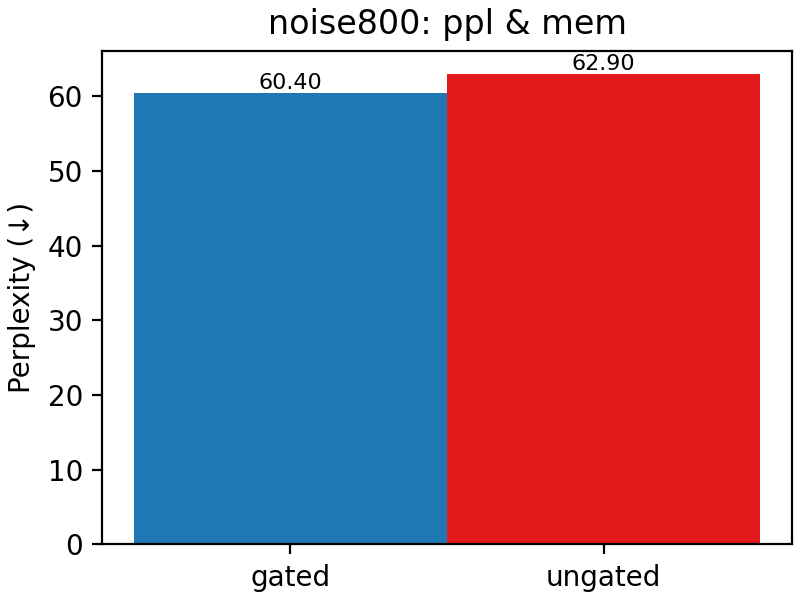}
        \caption{800 noise samples}
    \end{subfigure}\hfill
    \begin{subfigure}[b]{0.23\linewidth}
        \centering
        \includegraphics[width=\linewidth]{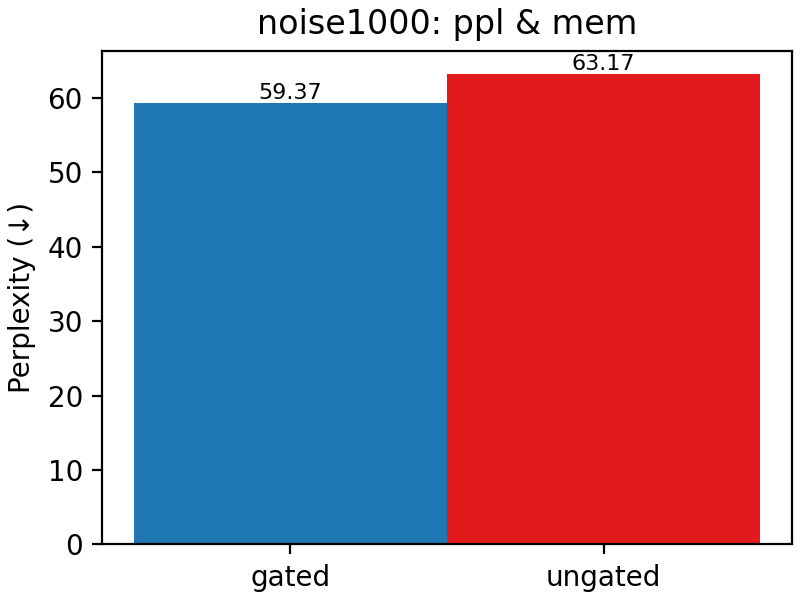}
        \caption{1000 noise samples}
    \end{subfigure}\hfill
    \begin{subfigure}[b]{0.23\linewidth}
        \centering
        \includegraphics[width=\linewidth]{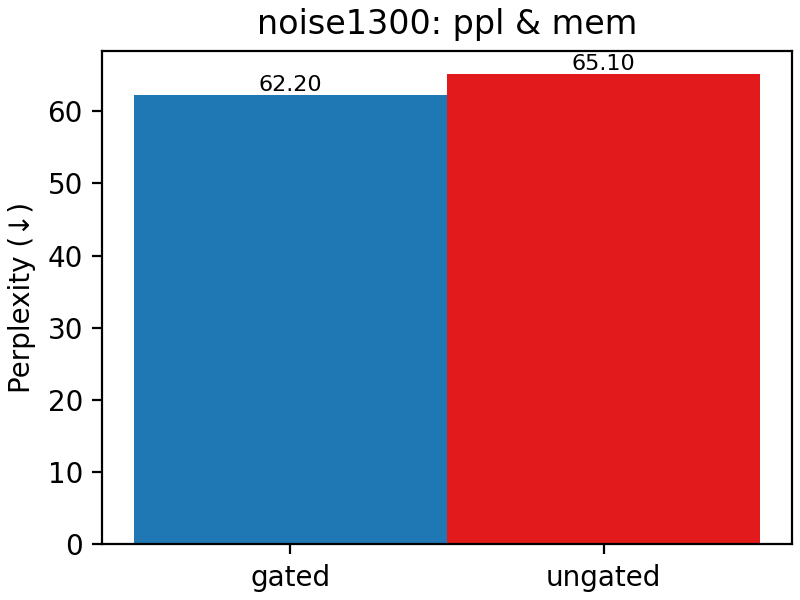}
        \caption{1300 noise samples}
    \end{subfigure}

    \vspace{0.6em}

    % ---------------- Row 2 ----------------
    \begin{subfigure}[b]{0.23\linewidth}
        \centering
        \includegraphics[width=\linewidth]{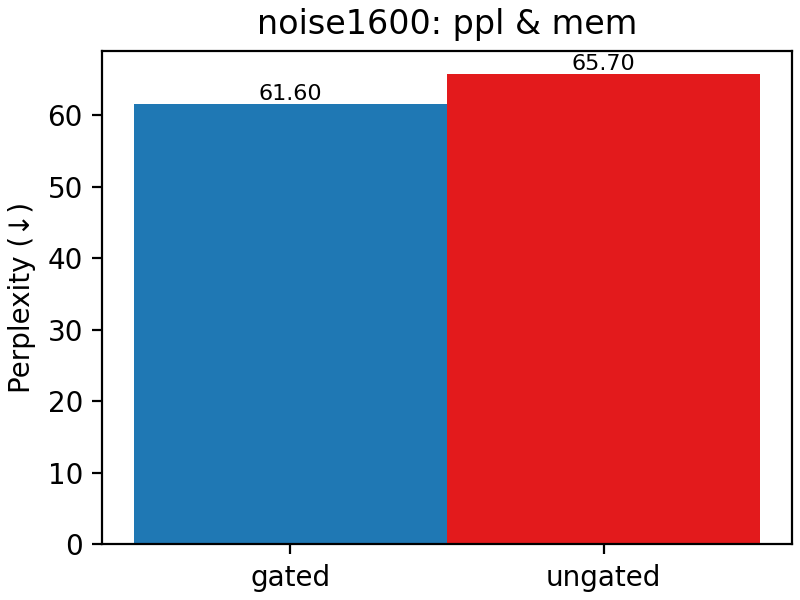}
        \caption{1600 noise samples}
    \end{subfigure}\hfill
    \begin{subfigure}[b]{0.23\linewidth}
        \centering
        \includegraphics[width=\linewidth]{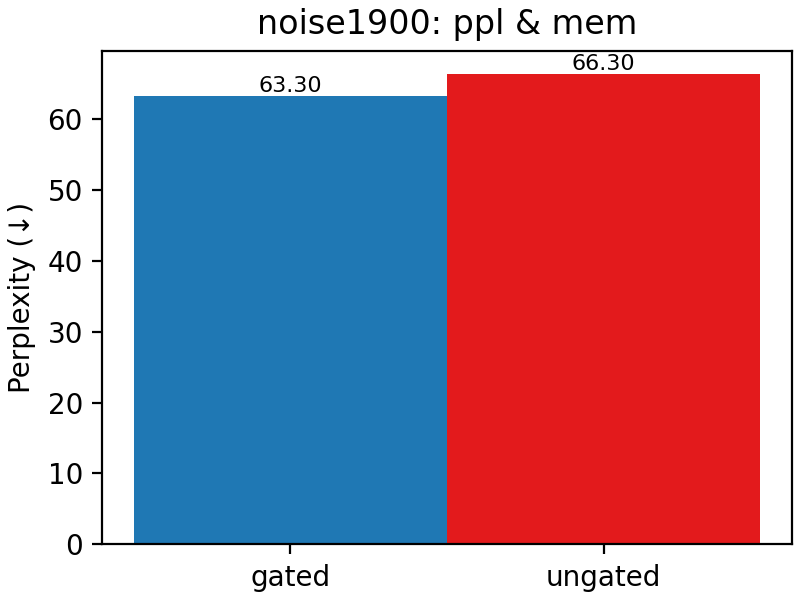}
        \caption{1900 noise samples}
    \end{subfigure}\hfill
    \begin{subfigure}[b]{0.23\linewidth}
        \centering
        \includegraphics[width=\linewidth]{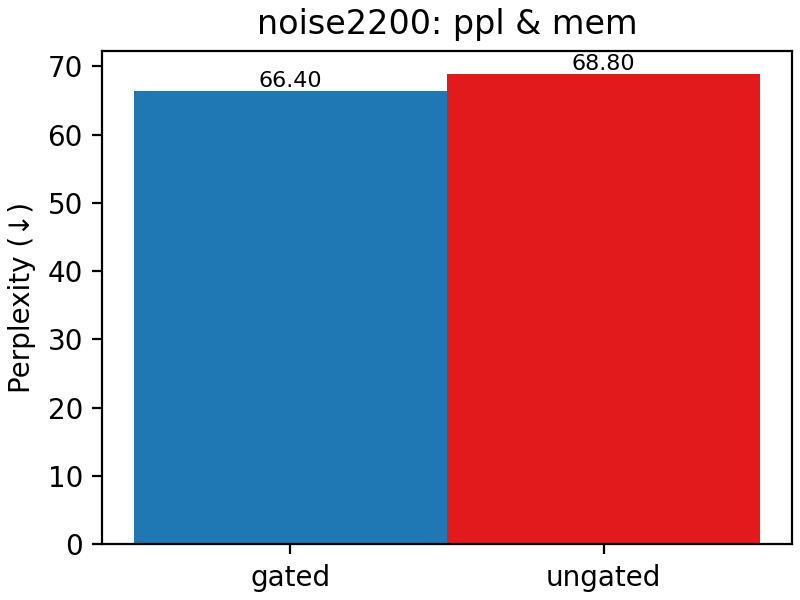}
        \caption{2200 noise samples}
    \end{subfigure}\hfill
    \begin{subfigure}[b]{0.23\linewidth}
        \centering
        \includegraphics[width=\linewidth]{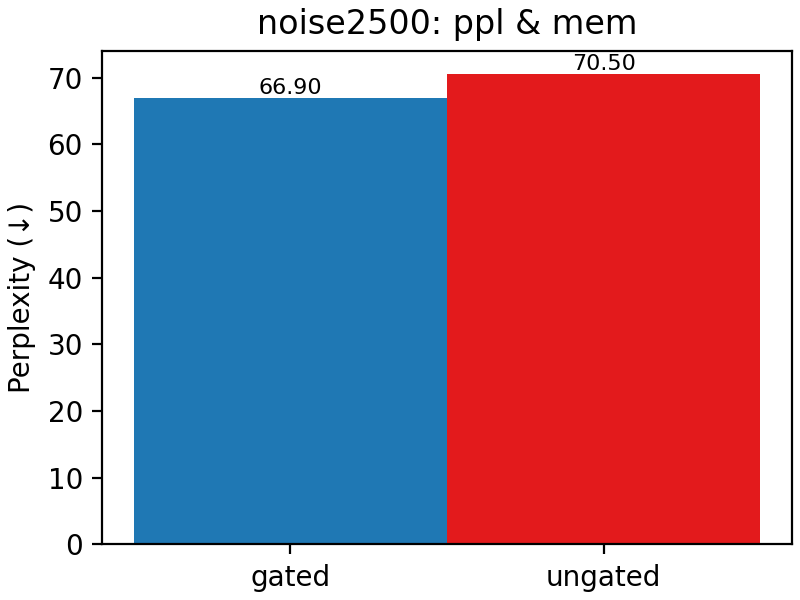}
        \caption{2500 noise samples}
    \end{subfigure}

    \caption{Ablation study the effect of Gating mechanism, the memorization rates are maintained at 0.}
    \label{fig:gate}
\end{figure}

\section{Broader Impact and Limitations}
\label{app:impact}

\subsection{Societal Implications}

Memorization mitigation has important implications for:
\begin{itemize}
    \item \textbf{Privacy}: Reducing verbatim reproduction of training data
    \item \textbf{Copyright}: Mitigating potential reproduction of copyrighted material
    \item \textbf{Fairness}: Reducing biases encoded in memorized training patterns
    \item \textbf{Robustness}: Improving generalization to out-of-distribution inputs
\end{itemize}

However, we acknowledge potential dual-use concerns:
\begin{itemize}
    \item Techniques that mitigate memorization could obscure training data sources, complicating model auditing
    \item Selective memorization reduction could be misused to hide specific training data while maintaining overall performance
    \item Over-aggressive memorization mitigation might remove useful factual knowledge
\end{itemize}

\subsection{Limitations}

Our approach has several limitations:

\begin{enumerate}
    \item \textbf{Requires reference model}: Computing $\omega_t$ requires access to a reference model (typically the pretrained model), which may not always be available.
    
    \item \textbf{Computational overhead}: SVD computation is $O(d^3)$ per layer, which can be expensive for very large models. However, this is a one-time cost.
    
    \item \textbf{Inference overhead}: Gating computation adds $O(d)$ operations per token, though this is negligible compared to the forward pass.
    
    \item \textbf{Threshold sensitivity}: Performance can be sensitive to the choice of threshold $\epsilon$, requiring some tuning.
    
    \item \textbf{Layer selection}: Effectiveness depends on choosing appropriate layers for intervention, which may require domain knowledge or empirical validation.
\end{enumerate}

\subsection{Future Directions}

Promising directions for future work include:

\begin{itemize}
    \item \textbf{Multi-modal extension}: Applying gated steering to vision-language models. Mitigating Memorization in vision-language models~\cite{wen2025quantifying}.
    \item \textbf{Online learning}: Updating $(u, v)$ continuously during deployment
    \item \textbf{Theoretical analysis}: Deriving generalization bounds for memorization mitigation
    \item \textbf{Privacy guarantees}: Establishing formal privacy guarantees (e.g., differential privacy) for our approach
    \item \textbf{Multilingual}: Memorization patterns vary across languages~\cite{satvaty2025memorization}, and we can extend our general pipeline in the multilingual setting.
\end{itemize}

\end{document}